%% file: iclr_2026.tex
\documentclass{article}

\usepackage{iclr2026_conference,times}

\usepackage[utf8]{inputenc} % allow utf-8 input
\usepackage[T1]{fontenc}    % use 8-bit T1 fonts
\usepackage{hyperref}       % hyperlinks
\usepackage{url}            % simple URL typesetting
\usepackage{booktabs}       % professional-quality tables
\usepackage{amsfonts}       % blackboard math symbols
\usepackage{nicefrac}       % compact symbols for 1/2, etc.
\usepackage{microtype}      % microtypography
\usepackage{xcolor}         % colors
\usepackage{comment}
\usepackage{amsmath,amssymb, amsthm, bbm, mathtools, commath}
\usepackage{color}
\usepackage{datetime}
\usepackage{hyperref}
\usepackage{verbatim}
\usepackage{graphicx}   % Standard for including graphics
\usepackage{pgf}
\usepackage{subfig}
\usepackage{booktabs}
\setcitestyle{square}
\usepackage{natbib}
\usepackage{algorithm}
\usepackage[noend]{algpseudocode}

\newcommand{\gray}[1]{{\color{gray}#1}}

\theoremstyle{definition}

\theoremstyle{definition}

\theoremstyle{definition}

\newcommand{\RR}{\mathbb R}

\newcommand{\mc}[1]{\mathcal{#1}}

\newcommand{\mrm}[1]{\mathrm{#1}}

\newcommand{\oms}{\{\!\!\{}
\newcommand{\cms}{\}\!\!\}}
\newcommand{\Aut}{\mathrm{Aut}}
\newcommand{\ourmodel}{ASEN}
\newcommand{\ourmodelspace}{ASEN }

\usepackage[capitalize,noabbrev]{cleveref}
\crefformat{equation}{(#2#1#3)} %to strip "eq."
\crefname{section}{Sec.}{Sec.}
\Crefname{section}{Section}{Sections}
\crefname{appendix}{App.}{App.}
\Crefname{appendix}{Appendix}{Appendices}
\crefname{table}{Tab.}{Tab.}
\Crefname{table}{Table}{Tables}
\crefname{figure}{Fig.}{Fig.}
\Crefname{figure}{Figure}{Figures}
\crefname{algorithm}{Alg.}{Alg.}
\Crefname{algorithm}{Algorithm}{Algorithms}
\crefname{theorem}{Thm.}{Thm.}
\Crefname{theorem}{Theorem}{Theorems}
\crefname{corollary}{Cor.}{Cor.}
\Crefname{corollary}{Corollary}{Corollaries}
\crefname{lemma}{Lem.}{Lem.}
\Crefname{lemma}{Lemma}{Lemmas}
\crefname{remark}{Rmk.}{Rmk.}
\Crefname{remark}{Remark}{Remarks}
\crefname{proposition}{Prop.}{Prop.}
\Crefname{proposition}{Proposition}{Propositions}

\usepackage{wrapfig}

\usepackage{thmtools,thm-restate}
\usepackage{subcaption}
\usepackage{enumitem}

\usepackage{multirow}
\usepackage{makecell}

\title{Any-Subgroup Equivariant Networks via \\Symmetry Breaking}

\author{%
\textbf{Abhinav Goel}$^{1}$, \textbf{Derek Lim}$^{1}$,
Hannah Lawrence$^1$,
\textbf{Stefanie Jegelka}$^{1,2}$, \textbf{Ningyuan Huang}$^{3}$\\
$^1$MIT,  $^2$TUM, $^3$Flatiron Institute}

\iclrfinalcopy % Uncomment for camera-ready version, but NOT for submission.

\begin{document}

\maketitle

\begin{abstract}

  The inclusion of symmetries as an inductive bias, known as ``equivariance'', often improves generalization on geometric data (e.g. grids, sets, and graphs). However, equivariant architectures are usually highly constrained, designed for symmetries chosen \emph{a priori}, and not applicable to datasets with other symmetries. This precludes the development of flexible, multi-modal foundation models capable of processing diverse data equivariantly. In this work, we build a single model --- the Any-Subgroup Equivariant Network (\ourmodel) --- that can be simultaneously equivariant to several groups, simply by modulating a certain auxiliary input feature. In particular, we start with a fully permutation-equivariant base model, and then obtain subgroup equivariance by using a symmetry-breaking input whose automorphism group is that subgroup.
   However, finding an input with the desired automorphism group is computationally hard. We overcome this by relaxing from exact to approximate symmetry breaking, leveraging the notion of 2-closure to derive fast algorithms. Theoretically, we show that our subgroup-equivariant networks can simulate equivariant MLPs, and their universality can be guaranteed if the base model is universal.
  Empirically, we validate our method on symmetry selection for graph and image tasks, as well as multitask and transfer learning for sequence tasks, showing that a single network equivariant to multiple permutation subgroups outperforms both separate equivariant models and a single non-equivariant model. 
\end{abstract}

\input{1.introduction}

\input{1.related_work}

\input{2.method}

\input{3.theory}

\input{4.experiment}

\section{Conclusion and Future Directions}

In this work, we introduce \ourmodel, a framework for building a flexible equivariant model capable of exploiting diverse symmetries. Given a subgroup $G \leq \mathbf{G}$, \ourmodelspace parameterizes $G$-equivariant functions via a base $\mathbf{G}$-equivariant model, and a symmetry breaking object whose autormophism group is $G$. For general $\mathbf{G}$, we prove that \ourmodelspace can achieve the desired subgroup symmetry and inherit universality properties from the base model. Focusing on the permutation group $\mathbf{G}=S_{n}$, we encode the desired subgroup symmetries via positional and edge features, which can be easily integrated to standard GNNs and expressive as equivariant MLPs. 
We empirically demonstrate the flexibility of \ourmodelspace to perform symmetry selection in graph and image tasks, and its effectiveness in exploiting shared symmetry structures in multitask and transfer learning for sequence data.

As a first step, we consider modeling symmetry (sub)groups acting globally on the input; a natural next step is to incorporate local symmetries, which play a key role for molecular graph applications \citep{thiede2021autobahn, zhang2024schur}). Another interesting direction is allowing the symmetry breaking object to be input-dependent. Incorporating ``soft'' equivariance priors in \ourmodelspace is another fruitful approach. Studying the scaling behavior of \ourmodelspace and the effect of symmetry model misspecification are key avenues for future work. Applying our framework for multi-physics pretraining \citep{mccabe2024multiple, rahman2024pretraining} to model task-specific symmetry breaking is a promising direction. Finally, exploring other flexible forms of symmetry breaking and exploiting beyond permutation subgroups offer a promising path towards equivariant foundation models. 

\clearpage
\subsubsection*{Acknowledgments}
The authors thank Erik Thiede (Cornell), Soledad Villar (JHU), Bruno Ribeiro (Purdue), Risi Kondor (UChicago), and Jinwoo Kim (KAIST) for valuable discussions. The authors also thank the anonymous reviewers and area chair for their constructive feedback. H.L. is supported by the Fannie and John Hertz Foundation. S.J. is supported by an Alexander von Humboldt professorship. This research was also partially supported by NSF AI Insitute TILOS.

\bibliography{refs}
\bibliographystyle{abbrvnat}

\clearpage
\appendix

\input{5.appendix_method}

\input{6.appendix_proofs}

\input{7.appendix_experiments}

\end{document}

%% file: 1.introduction.tex
\section{Introduction}\label{sec:intro}

Equivariant machine learning exploits symmetries in data to constrain the model with known priors, often leading to improved generalization \citep{elesedy2021provably, petrache2023approximation}, interpretability \citep{bogatskiy2024explainable}, and efficiency \citep{bietti2021sample}. Most existing equivariant models are tailored to specific symmetry groups chosen \emph{a priori}: for example, graph neural networks (GNNs) and DeepSets with permutation equivariance, convolution neural networks (CNNs) with translation equivariance, and Neural Equivariant Interatomic Potentials satisfying Euclidean group symmetries. While these equivariant models have demonstrated promising performance in datasets satisfying the prescribed symmetries, they are \emph{inflexible} in several important ways: (I) equivariant architectures typically require deriving and implementing group-specific equivariant layers, so substantial research and engineering must be done for architectural design whenever a new type of symmetry arises, and (II) equivariant architectures are typically only equivariant to one symmetry group, and significantly differ across symmetries. Thus, an equivariant model cannot easily transfer knowledge across domains with distinct symmetries, so equivariant models cannot benefit from the empirical successes of the foundation model paradigm~\citep{bommasani2021opportunities}.

In this work, we introduce a framework for building \emph{flexible} equivariant networks, named \emph{Any-Subgroup Equivariant Networks} (\ourmodel). Given a base group $\mathbf{G}$, we consider its subgroups $G \leq \mathbf{G}$ (typically known a priori) that capture the symmetries intrinsic to the domain (e.g. graph automorphisms) or induced by the task (e.g., sequence reversal). To design a subgroup-equivariant model $f$, we start with a base network $h_\theta$, which is equivariant to the large group $\mathbf{G}$ and thus overly constrained for our purpose --- it cannot represent functions that are equivariant only to $G$ but not $\mathbf{G} \setminus G$. To reduce the amount of constraints, we augment the input $x$ with a feature $\mathbf{v}$ that break the symmetries in $\mathbf{G} \setminus G$ but maintain the symmetries in $G$. To ensure this, we construct $\mathbf{v}$ so that its self-symmetry group (i.e. automorphism group) is equal to $G$, i.e. $\Aut(\mathbf{v}) = G$. Finally, we pass in the symmetry-breaking input $\mathbf{v}$ to our overly-constrained $h_\theta$, and obtain the model $f_\theta(x) = h_\theta(x, \mathbf{v})$. We prove that the model $f_\theta$ is indeed $G$-equivariant, and under certain conditions it is not equivariant to $\mathbf{G} \setminus G$; in other words, it has the correct equivariance. A trivial, but widespread, example of this technique is the use of positional encodings \citep{vaswani2017attention} to fully break the permutational symmetry of transformers (since every entry of the sinusoidal positional encoding vector is unique, $G=\text{Aut}(\mathbf{v})$ is the trivial group). When $\mathbf{v}$ has non-trivial automorphism, \emph{some} equivariance is retained.

\ourmodelspace overcomes inflexibility (I), since it only requires providing a single new input $\mathbf{v}$ (with correct automorphism group) to a base network $h_\theta$. 
Also, \ourmodelspace overcomes inflexibility (II), since a single instance of our model can process data from varying domains with different symmetry groups.

While our framework works for any $\mathbf{G}$, we focus on the particular case when $\mathbf{G} = S_n$ is the symmetry group acting as permutation matrices, so that our networks are equivariant to permutation subgroups. This covers the symmetries of many common domains such as sets and graphs, which allows us to leverage existing permutation equivariant models as the base model $h_\theta$. In particular, we may leverage existing set networks (inputs in $\RR^n$), graph neural networks (inputs in $\RR^{n^2}$) and hypergraph neural networks (inputs in $\RR^{n^K}$) for the base model. While large $K$ may be required for complex groups, to balance efficiency and expressivity, we focus on the $K=2$ case of graph neural networks, and develop a practical algorithm for computing the symmetry breaking object $\mathbf{v}$ as edge features, with (nearly) the desired self-symmetry $\Aut(\mathbf{v}) \approx G$. To do this, we use the notion of the 2-closure $G^{(2)}$ of a group $G$ \citep{ponomarenko2020two}, which provides a formal notion of a group that is close to the target group ($G^{(2)} \approx G$), and we compute $\mathbf{v}$ with $\Aut(\mathbf{v}) = G^{(2)}$. 

Theoretically, we show that under mild conditions, \ourmodelspace parameterized with graph neural networks has the exact permutation subgroup equivariance. Further, we prove that \ourmodelspace is expressive in two senses: it can approximate certain equivariant MLPs~\citep{maron2019universality, finzi2021practical} to arbitrary accuracy, and it is universal in the space of $G$-equivariant functions if the base model is universal over $\mathbf{G}$-equivariant functions.

To validate our approach, we apply \ourmodelspace 
to diverse settings: (1) exploiting symmetries \emph{within a single task}, including graph learning (human pose estimation and traffic flow prediction) and image classification (Pathfinder); (2) leveraging symmetries \emph{across different tasks} on sequences in multitask and transfer learning. Across these settings, our results highlight the flexibility of \ourmodelspace and practical utility of symmetry-aware architectures. Our framework supports both fine-grained control over group actions and strong transfer of learned representations, making it a powerful tool for structured generalization in neural networks. Our main contributions are summarized as follows:
\begin{itemize}
    \item We propose Any-Subgroup Equivariant Networks (ASEN), a framework for building a flexible equivariant model capable of modeling distinct symmetries across diverse tasks.
    \item We theoretically show that ASEN enforces any desired subgroup symmetry via proper choice of the symmetry-breaking input and architecture. We also prove that ASEN is as expressive as equivariant MLPs, with universality guarantees given a sufficiently expressive base model.
    \item We validate our framework in applications including symmetry selection, multitask learning and transfer learning, highlighting its flexibility and effectiveness in exploiting shared symmetry structures.
\end{itemize}

%% file: 1.related_work.tex
\section{Related Work}

\paragraph{Subgroup Equivariance} In equivariant network design, recent works \citep{blumsmith2024learningfunctionssymmetricmatrices, ashman2024approximately, lim2023graph} have proposed subgroup-equivariant models via augmenting an auxiliary input. Specifically,  \citet{blumsmith2024learningfunctionssymmetricmatrices} proposed a permutation-invariant model for symmetric matrices by using a DeepSet base model---invariant to a bigger group, together with a suitable symmetry-breaking parameter to reduce the base model symmetries; \citet{ashman2024approximately} used fixed symmetry breaking inputs to construct non-equivariant models or approximately equivariant ones; \citet{lim2023graph} leveraged node or edge features to reduce the amount of permutation symmetries in neural network computational graphs. While these works model subgroup equivariance, they are limited to the single-task setting. In contrast, we study the multitask and transfer learning setting, providing a recipe to build flexible equivariant models.  

\paragraph{Symmetry Breaking}
Symmetry breaking via node identification is a popular technique to enhance the expressivity of graph neural networks \citep{abboudsurprising, sato2021random, bevilacqua2025holographic}. Symmetry breaking of the \emph{input} has also been used to improve the flexibility of equivariant models for applications in graph generation and physical modeling \citep{smidt2021finding, lawrence2024improving, xie2024equivariant}. Unlike \citep{smidt2021finding, lawrence2024improving, xie2024equivariant} that perform input-dependent symmetry breaking, we break the symmetry of the \emph{model} uniformly for all inputs. Moreover, existing works typically focus on (approximate) equivariance to one particular group. In contrast, we build a single model capable of modeling diverse data equivariantly, via different choices of $\mathbf{v}$ adapted to the target application. 

\paragraph{Approximate and Adaptive Equivariance} Another direction towards flexible equivariant networks relies on approximate equivariance \citep{wang2022approximately,huang2023approximately}, soft equivariance by converting architectural constraints into a prior \citep{benton2020learning, finzi2021residual}, regularization \citep{kim2023regularizing}, or adaptive equivariance per task and environment \citep{gupta2024context}. Approximate equivariance can also improve generalization \citep{wang2022approximately, huang2023approximately}. This motivates our approach that approximates the automorphism group of the symmetry breaking input with the $2$-closure group.

%% file: 2.method.tex
\section{Method}
\subsection{General method (\ourmodel)}\label{sec:asen_description}

Here, we describe our general framework for \ourmodel. Let 
 $\mathbf{G}$ be a matrix group, and let $G \leq \mathbf{G}$ be a subgroup. Both groups have actions on the sets $\mathcal X$ and $\mathcal Y$. We desire our model to parameterize $G$-equivariant functions, that is, functions $f: \mathcal X \to \mathcal Y$ such that $f(gx) = gf(y)$ for $g \in G$. To this end, we consider a ``lift'' of $f$: a function $h_\theta: \mathcal X \times \mathcal V \to \mathcal Y$ on an expanded space, where we introduce an additional space $\mathcal V$ on which $G$ and $\mathbf{G}$ act. The function $h_\theta$ is $\mathbf{G}$-equivariant (i.e. equivariant to the larger group): 
 \begin{equation}
     h_\theta(gx, gv) = gh_\theta(x, v), \quad \forall g \in \mathbf{G}.
 \end{equation}
To obtain $f_\theta$ from $h_\theta$, we find a symmetry-breaking input $\mathbf{v} \in \mathcal V$ that is exactly self-symmetric to the subgroup $G$, i.e. its automorphism group is $G$,
 \begin{equation}
     \Aut(\mathbf{v}) = \{g \in \mathbf{G} : g\mathbf{v} = \mathbf{v} \} = G.
 \end{equation}
 Finally, we define our $G$-equivariant model $f_\theta: \mathcal X \to \mathcal Y$ as
 \begin{equation}
     f_\theta(x) = h_\theta(x, \mathbf{v}).
 \end{equation}
 The model $f_\theta$ is $G$-equivariant because for any $g \in G$,
 \begin{align}
     f_\theta(gx) & = h_\theta(gx, \mathbf{v})
      = h_\theta(gx, g\mathbf{v})
      = gh_\theta(x, \mathbf{v})
     = gf_\theta(x),
 \end{align}
 where the second equality follows from $g \in \Aut(\mathbf{v})$, and the third equality is due to $\mathbf{G}$-equivariance of $h_\theta$. For $g \in \mathbf{G}\setminus G$, this equality can break: if $g\mathbf{v} \neq \mathbf{v}$, then $h_\theta(gx, \mathbf{v})\neq h_\theta(gx, g\mathbf{v})$. In fact, if $h_\theta$ is injective for the input $\mathbf{v}$, then $f_\theta$ is \emph{only} equivariant to $G$, and not to any other elements in the larger group $\mathbf{G}$. These results are captured in \cref{prop:asen_equivariance}. As an example: consider $\mathbf{G} = O(3)$ acting on 3D point cloud $x \in \RR^{n \times 3}$, and the $O(3)$-equivariant base model $h_{\theta}$. Now we fix a particular axis $\mathbf{v}$ with the stabilizer $O(2)$. If $h_{\theta}$ is injective for $\mathbf{v}$, then $f_{\theta}=h_{\theta}(x, \mathbf{v})$ is only equivariant to $O(2)$ but not $O(3) \setminus O(2)$.

\subsection{Permutation Subgroup Equivariance via Hypergraph Symmetry Breaking}\label{sec:permeq}

To use our \ourmodelspace framework for parameterizing a $G$-equivariant function, we need two main components: (i) a method of parameterizing the base model $h_\theta$ that is equivariant to the larger group $\mathbf{G}$, and (ii) a way to construct or compute a symmetry breaking object $\mathbf{v}$ with automorphism group $\Aut(\mathbf{v}) = G$. In this subsection, we show that when $\mathbf{G} = S_n$ is the symmetric group acting as permutation matrices on $n$ objects, we can leverage existing equivariant architectures for (i); and we can develop a practical algorithm for (ii). For the rest of this paper, we primarily focus on this setting. 

To construct efficient and expressive symmetry breaking objects, we turn to \emph{hypergraphs}. Concretely, for a (matrix) group $G$ acting on $\RR^n$, a hypergraph on $n$ nodes is defined as $\mc H$ =  $(A^{(1)}, A^{(2)}, \ldots, A^{(K)})$, where $A^{(k)} \in \RR^{n^k}$ is an order $k$-tensor, and $K$ is the max tensor order. We can interpret $A^{(1)}$ as a (node) positional encoding, and $A^{(k)}$ as (hyper-)edge features for $k\ge2$.
The \emph{automorphism group} of the hypergraph is defined as
\begin{equation}
    \mrm{Aut}(\mc H) = \{P \in S_n : P^{\otimes^k} A^{(k)} = A^{(k)}, k = 1, \ldots, K\}.
\end{equation}

For instance, if $K = 2$, then this is the standard graph automorphism group
\begin{equation}
    \mrm{Aut}(\mc H) = \{P \in S_n : PA^{(1)} = A^{(1)}, PA^{(2)}P^\top = A^{(2)} \}. 
\end{equation}

For $K$ large enough, we can construct a hypergraph $\mc H$ such that $\mrm{Aut}(\mc H) $ uniquely determines $G$ \citep{Wielandt1969PermutationGT}. Then, we consider any existing permutation-equivariant hypergraph neural network $h_\theta: \RR^n \times \prod_{k=1}^K \RR^{n^k} \to \RR^n$ such that any $P \in S_n$,
\begin{equation}
   h_\theta(PX, PA^{(1)}, \ldots, P^{\otimes^K} A^{(K)}) =  P h_\theta(X, A^{(1)}, \ldots, A^{(K)}).\label{eqn:hypergraph-action}
\end{equation}
We abbreviate \cref{eqn:hypergraph-action} by $h_\theta(P(X, \mc H)) = P h_\theta(X, \mc H)$. Our $G$-equivariant model $f$ then takes the form
\begin{equation}
    f_\theta(X) = h_\theta(X, A^{(1)}, \ldots, A^{(K)}) \equiv  h_\theta(X, \mc H).
\end{equation}

\paragraph{Hypergraph Construction and Approximation with $2$-Closure}

Achieving exact symmetry breaking of $S_n$ to the desired subgroup $G$ may require a hypergraph $\mc H$ of prohibitively high order (up to $K \le n$). For efficiency, we fix $K=2$ and construct positional and edge features $\mc H = (A^{(1)}, A^{(2)})$, whose automorphism group $\mrm{Aut}(\mc H)$ reflects on how $G$ acts on nodes and pairs of nodes. Concretely, nodes $i,j$ (or node pairs $(i_1,i_2), (j_1, j_2)$) are assigned the same feature if and only if they are in the same $G$-orbit. In this way, the positional and edge features encode the orbit partition under $G$. By construction, $\mrm{Aut}(A^{(2)})$ is the 2-closure group of $G$, denoted as $G^{(2)}$ \citep{ponomarenko2020two}. In general $G \leq G^{(2)}$, and in many cases $G = G^{(2)}$ regardless of the permutation representations; such groups are called \emph{totally 2-closed}. This class includes finite nilpotent groups that are either cyclic or a direct product of a generalized quaternion group with a cyclic group of odd order \citep{abdollahi2018finite}.  See \cref{sec:experiments} for concrete examples. When $G < G^{(2)}$, the 2-closure introduces additional symmetries, leading to a symmetry group mismatch. We can use results from \citep{huang2023approximately} to analyze the approximation-generalization tradeoff.

\Cref{alg:2closure-edge-orbits} provides the procedure to compute $A^{(2)}$ with high-level SymPy commands, assuming access to the generating elements of $G$ \footnote{If instead we are given \emph{all} elements of $G$, we can pass them to \texttt{PermutationGroup} in SymPy and run the Schreier--Sims algorithm. This produces a \emph{base and strong generating set (BSGS)}: a compact, non-redundant set of generators adapted to a stabilizer chain. The BSGS allows efficient orbit and membership computations without ever enumerating the full group.}; see \cref{fig:example_features} for examples and \cref{sec:app_algorithm_details} for details. 
We remark that while \cref{alg:2closure-edge-orbits} computes all pairwise edge features, it can be applied to a sparse graph or a subset of edge features by restricting the support of $\mc H$. 

\begin{algorithm}[H]
\caption{Compute Edge Orbits $A^{(2)}$ such that $\operatorname{Aut}(A^{(2)}) = G^{(2)}$ (\gray{SymPy commands})}
\label{alg:2closure-edge-orbits}
\begin{algorithmic}[1]
\Require Generators $\sigma_1,\dots,\sigma_r$ of $G \leq S_n$
\Ensure Edge orbits $A^{(2)} \in [n] \times [n]$ where $A^{(2)}_{ij} = A^{(2)}_{mn} \iff (i,j) \sim_G (m,n)$.
\State \textbf{Lift generators:} For each $\sigma_i \in S_n$, define
  $\rho_i \in S_{n^2} : (x_a,x_b) \mapsto (\sigma_i(x_a), \sigma_i(x_b))$.
  \Statex \vspace{1pt}\hspace{\algorithmicindent} \gray{Encode $(x_a,x_b)$ as $a \cdot n + b$ and construct $\rho_i$ as \texttt{Permutation} of size $n^2$}.
\State \textbf{Form diagonal subgroup:} 
  Let $\Delta(G) \coloneqq \langle \rho_1,\dots,\rho_r \rangle $ be the subgroup of $S_{n^2}$.
   \Statex \vspace{1pt}\hspace{\algorithmicindent} $ \gray{\texttt{Delta = PermutationGroup([}\rho_1,\dots,\rho_r\texttt{])}.}$
\State \textbf{Compute edge orbits:} 
 For an edge $(x_a, x_b)$, apply $\rho_i$ repeatedly until no new pairs can be found.
   \Statex \vspace{1pt}\hspace{\algorithmicindent}  \gray{\texttt{Delta.orbits()}}
\end{algorithmic}
\end{algorithm}

\begin{figure}[h]
    \centering
    \includegraphics[width=0.9\columnwidth]{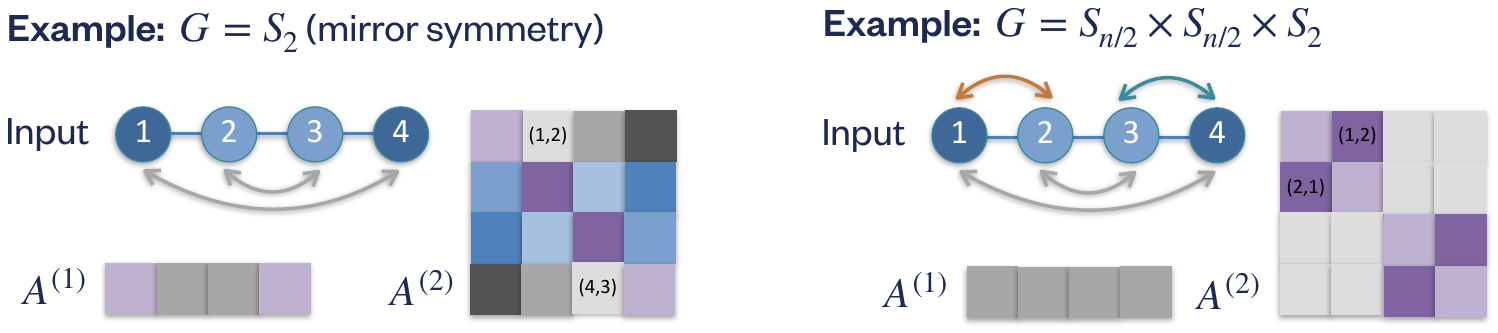}
    \caption{Example symmetry breaking objects as positional features $A^{(1)}$ and edge features $A^{(2)}$ for encoding subgroup symmetries in $4$-node paths. These symmetries are explored further in \cref{sec:experiments}.}
    \label{fig:example_features}
\end{figure}

\paragraph{Alignment} Note that it is important that the inputs have their node indices aligned to each other, i.e. they share a common labeling. This is implicitly captured via the choice of the concrete matrix group $G$. For example, the inputs in \cref{fig:example_features} are treated as sequences (instead of sets). The alignment is typically satisfied in applications such as sequence modelling, graph signal processing, and graph time series (see examples in \cref{sec:experiments}), but needs to be computed for other applications such as graph-level tasks. 

% \paragraph{Example 2: Translations}

% Consider $C_n$ acting on $\RR^n$ by shifts, i.e. the group generated by the shift operator $\tau$ which has $(\tau \cdot x)_{i} = x_{i-1 \text{ mod } n}$. Then, $(x_{i_1}, x_{i_2}) \sim (x_{j_1}, x_{j_2})$ assuming $i_2 - i_1  = j_2 - j_1 \mod n$. Thus, we can choose $A^{(1)} = \mathbf{1}$ and let $A^{(2)}$ be a circulant matrix with distinct diagonals. %\blue{TODO: More details}.

% \paragraph{Example 3: Reflections}

% Consider $D_2$ acting on $\RR^n$ via reflection, i.e., by operator $r$ such that $(r \cdot x)_i = x_{n - i-1}$. Note that the only equivalences are $(x_{i_1}, x_{i_2}) \sim (x_{n - 1 - i_1}, x_{n - 1 - i_2})$. The edge orbits in this case are point-symmetric matrices, with an example below for $n = 4$.
% \begin{align}
%     A^{(2)} & = \begin{bmatrix}
%            a_{1,1} & a_{1,2}  & a_{1,3} & a_{1,4} \\
%            a_{2,1} & a_{2,2} & a_{2,3} & a_{2,4}\\
%            a_{2,4} & a_{2,3} & a_{2,2} & a_{2,1}\\
%            a_{1,4} & a_{1,3} & a_{1,2} & a_{1,1}
%     \end{bmatrix}.
% \end{align}
%\sj{Maybe adjust the indices of the $a_{i,j}$ to the indices of the $x_i$ i the text above? (start at 0 vs start at 1)} \tes{not sure about this. now everything starts at $1$ consistently?}

%\paragraph{Example 4: Convolutions}

%\paragraph{Example 5: Graph Clusters}

%% file: 3.theory.tex
\section{Theoretical Results}
We first establish that under mild conditions, ASEN with exact symmetry breaking can achieve the desired subgroup symmetry $G \leq \mathbf{G}$ for any general matrix group $\mathbf{G}$ (\cref{prop:asen_equivariance}), and for the permutation group $S_n$ in the context of graph learning (\cref{lemm:only_G_eqv}). We then show that ASEN with approximate symmetry breaking can simulate equivariant MLPs (\cref{thm:ASEN-GMLP}), and enjoys universality guarantees if the base model is universal (\cref{thm:universality}). Proofs are deferred to \cref{app:proofs}.

The following \cref{prop:asen_equivariance} shows that \ourmodelspace has precisely the desired equivariance to $G$ under mild conditions.

%\begin{proposition}
\begin{restatable}{proposition}{PropOne}
~\label{prop:asen_equivariance}
     Let $h_\theta: \mathcal X \times \mathcal V \to \mathcal Y$ be $\mathbf{G}$-equivariant, and let $\Aut(\mathbf{v}) = G$. Then $f_\theta(x) := h_\theta(x, \mathbf{v})$ is equivariant to $G$. If additionally $h_\theta$ is injective in the input $\mathbf{v}$, then $f_\theta$ is not equivariant to any transformation in $\mathbf{G} \setminus G$.
 %\end{proposition}
 \end{restatable}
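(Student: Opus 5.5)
The first claim follows from the computation already sketched in \cref{sec:asen_description}. Fix $g \in G = \Aut(\mathbf{v})$, so $g\mathbf{v} = \mathbf{v}$. Since $G \leq \mathbf{G}$, the $\mathbf{G}$-equivariance of $h_\theta$ applies to $g$, and for every $x \in \mathcal X$ we get
\begin{equation*}
f_\theta(gx) = h_\theta(gx, \mathbf{v}) = h_\theta(gx, g\mathbf{v}) = g\,h_\theta(x, \mathbf{v}) = g f_\theta(x).
\end{equation*}

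For the second claim I first fix the meaning of the hypothesis. ``Injective in the input $\mathbf{v}$'' will mean: there is some $x_0 \in \mathcal X$ (or, more strongly, every $x$) such that $v \mapsto h_\theta(x_0, v)$ is injective on $\mathcal V$. Now take $g \in \mathbf{G} \setminus G$ and suppose, for contradiction, that $f_\theta(gx) = g f_\theta(x)$ for all $x$.

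By $\mathbf{G}$-equivariance, $g f_\theta(x) = g h_\theta(x,\mathbf{v}) = h_\theta(gx, g\mathbf{v})$. The assumption therefore gives $h_\theta(gx, \mathbf{v}) = h_\theta(gx, g\mathbf{v})$ for all $x$. Since $g$ acts bijectively on $\mathcal X$, the points $gx$ range over all of $\mathcal X$. So this reads $h_\theta(y, \mathbf{v}) = h_\theta(y, g\mathbf{v})$ for every $y$, and in particular for $y = x_0$. Injectivity in the second argument at $x_0$ then forces $g\mathbf{v} = \mathbf{v}$, i.e.\ $g \in \Aut(\mathbf{v}) = G$, which contradicts $g \notin G$. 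Hence $f_\theta$ is not equivariant to any $g \in \mathbf{G}\setminus G$.

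The only real subtlety is pinning down the injectivity hypothesis, since the statement is informal. The weakest version that makes the argument work is that some single $x_0$ separates $\mathbf{v}$ from each $g\mathbf{v} \neq \mathbf{v}$. I would state that formulation explicitly and note that it follows from the stronger ``injective for every $x$.'' The remaining step, that $x \mapsto gx$ is surjective on $\mathcal X$, holds because $\mathbf{G}$ acts by a group action, so $g^{-1}$ supplies the preimage.
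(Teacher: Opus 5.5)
Your proof is correct and follows the paper's argument. Both rest on the identity $g f_\theta(x) = h_\theta(gx, g\mathbf{v})$, with injectivity in the second argument used to separate $h_\theta(gx,\mathbf{v})$ from $h_\theta(gx,g\mathbf{v})$. The paper argues directly and uses injectivity at every point $gx$, so equivariance fails for all $x$; your contrapositive version needs injectivity only at a single $x_0$ (taking $x = g^{-1}x_0$), which is the weakening the paper's remark after its proof points to.
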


\paragraph{Hypergraph Symmetry Breaking}

As noted in Section~\ref{sec:permeq},
we generally take $h_\theta$ to be a graph neural network when parameterizing permutation subgroup equivariant functions.
In what follows, we characterize the condition for correct equivariance (i.e., $h_\theta$ is equivariant to $G$ but not $\mathbf{G} \setminus G$) when $h_{\theta}$ is a one-layer message-passing neural network (MPNN) and $\mathbf{G} = S_n$ operating on nodes and edges, and $G = \Aut{(A^{(2)})}$ is the automorphism group of a weighted graph $A^{(2)} \in \RR^{n \times n}$. By the definition of message-passing, the output $h_{\theta}$ at node $i$ is computed as
 \begin{equation}
     h_{\theta}(X, A^{(2)})[i] = \phi\left(\psi_n(X_i), \tau \left( \oms \psi_e(X_i, X_j, A^{(2)}_{i,j}) \mid j \in [n] \cms \right) \right) \label{eqn:mpnn}
 \end{equation}
 where $\psi_e$ is the edge update function, $\psi_n, \phi$ are the node update functions, and $\tau$ is the edge multiset aggregation. In the following lemma, we show that if the constituent functions are injective, then $h_\theta$ is correctly equivariant to only $G$ under mild assumptions on the node features.

\begin{restatable}{lemma}{LemOne}
%\begin{lemma}
~\label{lemm:only_G_eqv}
 If a one-layer MPNN $h_{\theta}$ uses injective functions for (hyper-)edge feature update $\psi_e$, node update $\phi$, and (hyper-)edge multiset aggregation $\tau$, and if the node features are distinct, then $h_{\theta}$ is not equivariant to permutations in $S_n \setminus G$ where $G=\Aut(A^{(2)})$.     
%\end{lemma}
\end{restatable}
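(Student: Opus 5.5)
Fix $P \in S_n \setminus G$ with associated permutation $\sigma$. The goal is to show that the map $X \mapsto f_\theta(X) := h_\theta(X, A^{(2)})$ fails to commute with $P$ for some (indeed any admissible) input $X$. The natural route is the one in \cref{prop:asen_equivariance}: we need $h_\theta(PX, A^{(2)}) \neq P h_\theta(X, A^{(2)})$. Since $h_\theta$ is $S_n$-equivariant in the joint input, $P h_\theta(X, A^{(2)}) = h_\theta(PX, PA^{(2)}P^\top)$. So it suffices to show that
\[
h_\theta(Y, A^{(2)}) \neq h_\theta(Y, B)
\]
for $Y = PX$ (which again has distinct node features) and $B = PA^{(2)}P^\top$. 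We have $B \neq A^{(2)}$ precisely because $P \notin \Aut(A^{(2)}) = G$. In other words, we reduce to showing that the one-layer MPNN is injective in the edge input once the node features are distinct, which is the hypothesis that \cref{prop:asen_equivariance} needs.

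\textbf{Injectivity in the edge input.} Because $B \neq A^{(2)}$, some pair $(i,j)$ has $B_{ij} \neq A^{(2)}_{ij}$. I would then argue that the output at node $i$ differs.
\begin{enumerate}
\item Since $\psi_e$ is injective, $\psi_e(Y_i, Y_j, B_{ij}) \neq \psi_e(Y_i, Y_j, A^{(2)}_{ij})$.
\item The hard part is the multiset comparison. We must rule out that
\[
\oms \psi_e(Y_i, Y_k, B_{ik}) \mid k \in [n] \cms = \oms \psi_e(Y_i, Y_k, A^{(2)}_{ik}) \mid k \in [n] \cms
\]
through some rematching of neighbors. This is exactly where the distinct node features enter. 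By injectivity of $\psi_e$, each element $\psi_e(Y_i, Y_k, \cdot)$ determines its neighbor feature $Y_k$, and hence, since the features are distinct, its index $k$. Equality of the multisets would therefore force a $k$-by-$k$ match, giving $B_{ik} = A^{(2)}_{ik}$ for all $k$. This contradicts $k = j$, so the two multisets differ.
\item Since $\tau$ is injective on multisets, the aggregated messages differ.
\item Since $\phi$ is injective (jointly in its arguments), the outputs at node $i$ differ, even though the first argument $\psi_n(Y_i)$ is the same in both.
\end{enumerate}

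\textbf{Conclusion.} Combining the steps, $h_\theta(PX, A^{(2)}) \neq h_\theta(PX, PA^{(2)}P^\top) = P h_\theta(X, A^{(2)})$. Hence $f_\theta$ is not equivariant to $P$. As $P \in S_n \setminus G$ was arbitrary, this proves the claim.

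I also expect to add a remark on how to read the hypotheses. Injectivity of $\phi$ should be understood jointly in both arguments, or at least in the aggregation argument. The distinct-features assumption is needed for the relabeled input $PX$ as well, which holds automatically.
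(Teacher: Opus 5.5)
Your proposal is correct and takes essentially the paper's route. Like the paper, you use the joint $S_n$-equivariance of the MPNN to reduce the claim to comparing $h_\theta(\sigma X, A^{(2)})$ with $h_\theta(\sigma X, \sigma A^{(2)})$ at a node $i$ whose row contains a differing edge entry. You then push that difference through $\psi_e$, $\tau$, and $\phi$ using injectivity and the distinct node features. Your multiset step, where each message determines its neighbor index, is just a more explicit version of the paper's argument that the multisets of triples differ and therefore so do their images under $\psi_e$.
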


Note that these injectivity conditions are similar to sufficient conditions under which message passing graph neural networks have the same expressive power as the 1-Weisfeiler-Leman graph isomorphism test~\citep{xu2018how}.
We can similarly conclude that these conditions are sufficient for deeper MPNNs to have the correct $G$-equivariance. Also, we can show an analogous result for hypergraph networks by viewing them as higher-order MPNNs operating on hyperedges, where $\sigma \in S_n \setminus G$ implies that there exists a hyperedge $(i,j, \ldots, k)$ such that $A_{i,j,\ldots, k} \neq A_{\sigma(i), \sigma(j), \ldots, \sigma(k)}$.

\paragraph{Connections to Equivariant MLPs}  Here, we show that \ourmodelspace with approximate symmetry breaking can simulate certain configurations of a common type of equivariant neural network, sometimes called an \emph{equivariant MLP}, which consists of equivariant linear maps and elementwise nonlinearities~\citep{ravanbakhsh2017equivariance, maron2019universality, finzi2021practical}. When applied to a group $G$ that acts as permutation matrices on $\RR^n$, an equivariant MLP is defined as a composition: $T_L \circ \sigma \circ \cdots \circ \sigma \circ T_1$, where $\sigma$ is an elementwise nonlinearity, and each $T_i: \RR^{n^{k_i}} \to \RR^{n^{k_{i+1}}}$ is a $G$-equivariant linear map (for simplicity, we ignore channel dimension here). We call $k^* = \max_i k_i$ the \emph{order} of the G-MLP, so if $T_i: \RR^n \to \RR^n$ for each $i$ then the G-MLP has order 1. We prove the following result:

\begin{restatable}{theorem}{ThmASENGMLP}~\label{thm:ASEN-GMLP}
%\begin{theorem}
    Any order 1 $G$-MLP can be approximated to arbitrary accuracy on a compact domain via \ourmodelspace with $K=2$ and an MPNN base model $h_{\theta}$.
%\end{theorem}
\end{restatable}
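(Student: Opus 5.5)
The plan is to simulate the order-1 $G$-MLP layer by layer and then control the accumulated error. First I characterize an order-1 $G$-equivariant linear map $T:\RR^n\to\RR^n$. Equivariance $PTP^\top = T$ for all $P\in G$ means the entries of $T$ are constant on $G$-orbits of ordered pairs: $T_{ij} = T_{\sigma(i)\sigma(j)}$ for $\sigma\in G$. Hence $T_{ij} = t(A^{(2)}_{ij})$ for some function $t$ on the finite set of edge-orbit labels. These labels are exactly what \cref{alg:2closure-edge-orbits} encodes in $A^{(2)}$, and they coincide with the $G^{(2)}$-orbits on pairs, since $G$ and $G^{(2)}$ have the same orbits on pairs. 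Any bias term $b\in\RR^n$ with $Pb=b$ is constant on $G$-orbits of nodes. It is therefore a function of the diagonal label $A^{(2)}_{ii}$, or of $A^{(1)}_i$ if we include $A^{(1)}$. So the only information the network needs, the orbit labels, is available to the MPNN through the edge features $A^{(2)}$.

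Next I show that a single MPNN layer of the form \cref{eqn:mpnn} can realize $x\mapsto Tx+b$ exactly. I take $\psi_e(X_i,X_j,A^{(2)}_{ij}) = t(A^{(2)}_{ij})\,X_j$, sum aggregation for $\tau$, $\psi_n(X_i)$ carrying the bias via the self-loop label, and $\phi$ returning the aggregated value, optionally followed by $\sigma$. The sum over $j$ gives $(TX)_i$. Since the orbit labels take finitely many values, $t$ can be any function of the label; in particular it can be an exact lookup when $A^{(2)}$ uses integer codes.

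If the base model is required to use neural (MLP) components rather than arbitrary functions, I approximate each component uniformly. On a compact domain, the map $(x_j,a)\mapsto t(a)x_j$ is continuous on $K\times\{\text{finitely many labels}\}$, so an MLP $\psi_e$ approximates it uniformly to within $\epsilon$ by universal approximation. The same holds for $\phi$, which approximates $\sigma(\cdot + b)$ on bounded sets.

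Finally, I stack $L$ such layers and bound the error by induction. Each exact layer is Lipschitz on bounded sets, because $\sigma$ is continuous and we may assume it is Lipschitz on compacts, or else use uniform continuity. Images of compact sets stay compact. So the per-layer errors $\epsilon_\ell$ propagate to a total error of the form $\sum_\ell C_\ell\epsilon_\ell$, which can be made arbitrarily small. By \cref{prop:asen_equivariance}, the resulting ASEN is $G^{(2)}$-equivariant, consistent with the $G$-MLP being approximated.

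The main obstacle is the approximation step with learned components: uniform approximation must be controlled on slightly enlarged compact sets that contain the perturbed intermediate activations. If edge labels are continuous-valued codes rather than discrete, $t$ must additionally be extended to a continuous function interpolating the finitely many label values. I would handle this by choosing distinct real labels and using any continuous interpolant.
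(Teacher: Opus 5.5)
Your proposal is correct and follows essentially the same route as the paper: an order-1 $G$-equivariant linear map has entries constant on node-pair orbits, the edge labels $A^{(2)}$ (whose $G^{(2)}$-orbits on pairs coincide with the $G$-orbits) let the message $t(A^{(2)}_{ij})X_j$ under sum aggregation reproduce $T$, and MLPs approximate these components after a continuous extension off the finitely many labels (the paper writes $t$ as a one-hot $\mathrm{MLP}^e$ followed by $\mathrm{MLP}^v(x,y)=x\sum_l a_l y_l$). Your layer-by-layer error propagation supplies the composition step the paper skips by setting $L=1$. One small slip: a bias (which the paper's $G$-MLP definition does not include anyway) has to enter through the self-loop message $\psi_e(X_i,X_i,A^{(2)}_{ii})$, whose label identifies the node orbit because the diagonal is a union of pair orbits, not through $\psi_n$, which never sees $A^{(2)}$.
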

We remark that much like an equivariant MLP can increase expressivity by increasing the order $k^*$, we can increase expressivity in \ourmodelspace by increasing $K\ge 2$ and using the $K$-closure group approximation \citep{ponomarenko2020two}. While we show this relationship in expressivity at $k^* = 1$ and $K=2$, we believe that there may be relationships between the two methods at higher orders, and leave it to future work.

\paragraph{Universality Results}

We next show that the universality of ASEN follows from the universality of its base model. 

%\begin{theorem}[Universality]
\begin{restatable}{theorem}{ThmUniversal}~\label{thm:universality}
Let $\mathbf{G}$ be a compact group, $\mathcal{X}, \mathcal{V}$ be compact metric $\mathbf{G}$-spaces, and $\mathcal{Y}$ be a compact $\mathbf{G}$-space. Let $f_\theta:\mathcal{X} \times \mathcal{V} \rightarrow \mathcal{Y}$ be a universal family of continuous $\mathbf{G}$-equivariant networks, i.e. $f_\theta(gx, gv) = g\cdot f_\theta(x,v)$. Consider $\mathcal{H} \in \mathcal{V}$ with stabilizer equal to a subgroup $G \leq \mathbf{G}$. Then, the family $f_\theta(\cdot, \mathcal{H})$ is universal over continuous $G$-equivariant functions from $\mathcal{X}$ to $\mathcal{Y}$.
%\end{theorem}
\end{restatable}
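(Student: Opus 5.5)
The plan is to reduce universality over $G$-equivariant functions to universality of the base family over $\mathbf{G}$-equivariant functions, via an extension construction. Fix a continuous $G$-equivariant $F:\mathcal X\to\mathcal Y$ and $\epsilon>0$. I will build a continuous $\mathbf{G}$-equivariant $\tilde F:\mathcal X\times\mathcal V\to\mathcal Y$ with $\tilde F(x,\mathcal H)=F(x)$ for all $x$. Universality of the base family then gives $\theta$ with $\sup_{(x,v)} d(f_\theta(x,v),\tilde F(x,v))<\epsilon$. Restricting to $v=\mathcal H$ yields $\sup_x d(f_\theta(x,\mathcal H),F(x))<\epsilon$, which is the claim. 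The restriction step is immediate, so all the work is in constructing $\tilde F$.

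\emph{Step 1: define $\tilde F$ on the orbit $\mathbf{G}\mathcal H$.} For points of the form $(x, g\mathcal H)$ set $\tilde F(x,g\mathcal H):=g\,F(g^{-1}x)$.
\begin{itemize}
\item \emph{Well-definedness.} If $g\mathcal H=g'\mathcal H$, then $k:=g^{-1}g'$ lies in the stabilizer, which is $G$. Writing $g'=gk$ and using $G$-equivariance of $F$ gives $g'F(g'^{-1}x)=gk\,F(k^{-1}g^{-1}x)=gF(g^{-1}x)$.
\item \emph{Equivariance.} For $h\in\mathbf{G}$, $\tilde F(hx,hg\mathcal H)=hgF(g^{-1}h^{-1}hx)=h\tilde F(x,g\mathcal H)$.
\item \emph{Continuity.} Since $\mathbf{G}$ is compact, the orbit map $\mathbf{G}/G\to\mathbf{G}\mathcal H$ is a homeomorphism. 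The map $(x,g)\mapsto gF(g^{-1}x)$ is continuous on $\mathcal X\times\mathbf{G}$ and descends to the quotient. Hence $\tilde F$ is continuous on the closed set $\mathcal X\times\mathbf{G}\mathcal H$.
\end{itemize}

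\emph{Step 2: extend equivariantly to all of $\mathcal X\times\mathcal V$ (the main obstacle).} First attempt: use a Tietze/Dugundji-type extension to some continuous $E$ on $\mathcal X\times\mathcal V$, then average it, $\bar E(x,v)=\int_{\mathbf{G}} g\,E(g^{-1}x,g^{-1}v)\,dg$ with normalized Haar measure. On the invariant closed set the integrand is constantly $\tilde F$, so $\bar E$ still extends $\tilde F$, is $\mathbf{G}$-equivariant, and is continuous by compactness and uniform continuity.

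The delicate point is that averaging requires $\mathcal Y$ to admit convex combinations compatible with the action. In the intended setting $\mathcal Y$ is (a compact subset of) a linear representation, so I would assume $\mathcal Y$ sits in a normed space with a linear, isometric (after averaging the norm) action. I would note that universality is usually stated with a target of this type. Tietze extension is then applied coordinatewise, or via Dugundji for a Banach target. If $\mathcal Y$ is only a compact $\mathbf{G}$-space, I would instead invoke the equivariant Tietze theorem (Gleason/Palais) for compact groups acting on metric spaces, which provides exactly such an extension when the target is a convex $\mathbf{G}$-invariant set in a representation.

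The bulk of the care goes into justifying this extension step. Continuity in Step 1 is the secondary point, and it rests on compactness of $\mathbf{G}$, which makes $\mathbf{G}\mathcal H$ closed and the quotient map proper.
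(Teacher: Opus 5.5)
Your proof follows the paper's route. Step 1 is the paper's construction $n(x,g\mathcal H)=g f^*(g^{-1}x)$ on $\mathcal X\times\mathcal O$, with the same well-definedness check via the stabilizer and the same equivariance computation. The paper then approximates on the orbit slice and restricts to $v=\mathcal H$, as you do.

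The real difference is Step 2. The paper only asserts that universality on $\mathcal X\times\mathcal V$ passes to the closed subdomain $\mathcal X\times\mathcal O$. That tacitly requires every continuous $\mathbf G$-equivariant map on $\mathcal X\times\mathcal O$ to extend $\mathbf G$-equivariantly to $\mathcal X\times\mathcal V$. Your Tietze/Dugundji extension followed by Haar averaging is what actually proves this. Your identification $\mathbf G/G\cong\mathbf G\mathcal H$ also properly justifies the paper's one-line continuity claim.

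Your added hypothesis, that $\mathcal Y$ be a convex $\mathbf G$-invariant subset of a linear representation, is not a weakness: without something like it the statement as written is false. Take
\begin{itemize}
\item $\mathbf G=SO(2)$ and $\mathcal X$ a point;
\item $\mathcal V$ the closed unit disk with rotations and $\mathcal H=(1,0)$, so the stabilizer $G$ is trivial;
\item $\mathcal Y=S^1\cup\{0\}\subset\RR^2$.
\end{itemize}
Any continuous equivariant map on the disk must send the center to a rotation-fixed point of $\mathcal Y$, which can only be $0$. By connectedness the map is then identically $0$. So every member of a universal family takes the value $0$ at $\mathcal H$, whereas $G$-equivariant maps from a point may take any value in $\mathcal Y$. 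Here the orbit-level map $g\mathcal H\mapsto g\,(1,0)$ has no equivariant extension to the disk, which is exactly where the paper's Step 2 breaks. The paper's closing remark invoking Jaworowski's equivariant extension theorem points to this same missing ingredient.

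One small repair to your Step 2. If you extend coordinatewise by Tietze into the ambient space, first compose with a retraction onto the closed convex set $\mathcal Y$ before averaging. Alternatively, use Dugundji, whose extension stays in the convex hull of the image. Either way the averaged map then still takes values in $\mathcal Y$.
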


We remark that there exist universal architectures for point clouds \citep{Hordan_2024,blumsmith2024learningfunctionssymmetricmatrices}. For graphs, universal architectures typically require a combinatorially large feature space \citep{maron2019universality}, or only hold in a probabilistic sense \citep{abboudsurprising}. However, if we restrict to graphs with unique node features, then MPNNs (with sufficient depth and width) are universal \citep{loukas2020graph}. Consequently, \cref{thm:universality} ensures that whatever variety of universality the family $f_\theta$ enjoys is inherited by $f_\theta(\cdot, \mathcal{H})$.

%% file: 4.experiment.tex
\section{Experiments}\label{sec:experiments}

Towards developing a general-purpose equivariant foundation model, we evaluate \ourmodelspace in diverse experimental settings \footnote{Source code available at \url{https://github.com/amgoel21/perm_equivariance_graph_formulation}.} by answering the following questions:
\begin{enumerate}[label=Q\arabic*]
    \item For a single task, can \ourmodelspace---with one architecture---explore different symmetries and reveal the impact of the group choice (\cref{sec:exp_sym_select})?
    \item Can \ourmodelspace leverage shared symmetry structure across tasks to outperform task-specific equivariant models or non-equivariant baselines in multitask learning (\cref{sec:exp_multitask}) and  transfer learning (\cref{sec:exp_transfer})? 
\end{enumerate}

\paragraph{Architecture} Our backbone is a permutation-equivariant graph neural network (GNN), composed of input layers (e.g. embedding, MLP), followed by four layers of GATv2 message-passing \citep{velikovi2017graph}, and concluded with output layers (e.g., projection, aggregation). Standard dropout and layer normalization are applied throughout. There are two task-specific modules: \emph{EdgeEmbedder} that calls \cref{alg:2closure-edge-orbits} to categorize edge orbits and \emph{learn their embeddings}; \emph{TokenEmbedder} that maps (discrete) node features to learnable token embeddings for classification tasks (omitted for regression tasks); See \cref{fig:ASEN_architecture} for an overview. The preprocessing cost (\cref{alg:2closure-edge-orbits}) scales as $O(r n^2)$ where $n$ is the number of nodes and $r$ is the number of the generators of the group $G$. The training cost, using a message-passing GNN backbone, scales as $O(m n^2)$ on a fully-connected dense graph where $m$ is the number of training data points, and $O(m |E|)$ on a sparse graph where $|E|$ is the number of edges.

As \emph{EdgeEmbedder} is a learnable module, \ourmodelspace can \emph{discover more symmetries} from data if the chosen group $G^{(2)}$ (specifying the edge orbits) is smaller than the target group; see evidence in \cref{sec:exp_synthetic_tasks}.On the other hand, \ourmodelspace can fail when $G^{(2)}$ is much larger than $G$, as we show in \cref{exp:sec_negative}.

\begin{figure}[h]
    \centering
    \includegraphics[width=0.8\columnwidth]{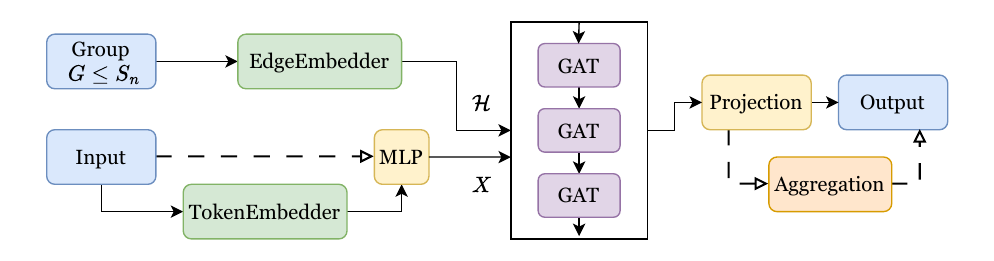}
    \caption{\ourmodelspace Architecture to model any permutation subgroup-equivariant functions.} % (dashed lines represent the invariant model)
    \label{fig:ASEN_architecture}
\end{figure}

\subsection{Symmetry Model Selection Applications}\label{sec:exp_sym_select}

In this section, we consider exploring different symmetry choices for a given task. Specifically, for each subgroup $G$ in a candidate set chosen a priori, we train a new instance of \ourmodelspace with edge features satisfying $\operatorname{Aut}(A^{(2)}) = G^{(2)}$ (or positional features satisfying $\operatorname{Aut}(A^{(1)}) = G^{(1)}$). Our results highlight the utility of choosing subgroup symmetries informed by the structure of the domain, such as the reflection symmetry in skeleton graphs, the continuous road structure in traffic graphs, and the local symmetry in image grids. 

\subsubsection{Graph Tasks} \label{subsec:sym_select}
We apply \ourmodelspace to perform symmetry model selection for learning on a fixed graph setting, using the experimental set-up from \cite{huang2023approximately}. Unlike \cite{huang2023approximately} that constructs distinct $G$-equivariant MLPs for each group,  \ourmodelspace offers a unified architecture to flexibly model different $G$-equivariance by symmetry breaking (i.e., via positional or edge features).

\paragraph{Human Pose Estimation}

\begin{wrapfigure}{r}{0.1\textwidth}
\centering
\includegraphics[width=0.1\textwidth]{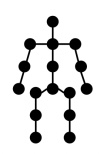}
\end{wrapfigure}
We begin with an application in human pose estimation, using the Human3.6M dataset \citep{humandataset}, which consists of 3.6 million human poses from various images. Our input features consist of 2D coordinates $X \in \mathbb{R}^{16 \times 2}$ representing joint positions on a skeleton graph $A \in \{0, 1\}^{16 \times 16}$ (see Figure inset). 
The model predicts the corresponding 3D joint positions in $\mathbb{R}^{16 \times 3}$. Performance is evaluated using the standard P-MPJPE (Procrustes-aligned Mean Per Joint Position Error) metric.
We consider three kinds of edge feature $\mathcal{H} = A^{(2)}$ (\cref{alg:2closure-edge-orbits}): (1) fully-connected $A^{(2)}_{f}$, (2) sparse $A^{(2)}_{s} \equiv A$ where edges are constrained to the support of the skeleton graph $A$, and (3) weakly sparse combining $A^{(2)}_{f}+A^{(2)}_{s}$. 
We consider \emph{selecting} different automorphism groups of the human skeleton edges $\mrm{Aut}(A^{(2)})$ (that yields the best performance): $ S_2$ (full left-right reflection), $S_2^2$ (left arm/right arm and left leg/right leg), $S_2^6$ (each left-side joint independently mapped to corresponding joint on right side), and $I$ (no equivariance). 
Our results in \cref{tab:pose-symmetry} obtained from a single model \ourmodelspace match with those reported in \citep{huang2023approximately} that require multiple distinct equivariant MLPs. 
Notably, the weakly sparse graph yields some of the strongest results, highlighting \ourmodel's flexibility in capturing multiple symmetries. This approach can also be extended to represent a graph with multiple sets of symmetries.

\begin{table}[ht]
\centering
\begin{minipage}[t]{0.6\textwidth}
    \centering
    \footnotesize % Reduce font size slightly

    \caption{
    P-MPJPE error (\textdownarrow) for human pose estimation using different symmetry groups and edge frameworks. 
    %For each symmetry group, the best-performing edge configuration is \textbf{bolded}.
    }
    \vspace{0.5em}
    \begin{tabular}{lccc}
        \toprule
        \textbf{Group} & \textbf{Fully Connected} & \textbf{Sparse} & \textbf{Weakly Sparse} \\
        \midrule
        $I$     & 34.71 & \textbf{33.39} & 34.75 \\
        $S_2$            & 39.48 & 40.52          & \textbf{38.80} \\
        $S_2^2$          & 43.24 & 42.37          & \textbf{40.67} \\
        $S_2^6$          & 47.54 & 49.45          & \textbf{46.52} \\
        \bottomrule
    \end{tabular}
    \label{tab:pose-symmetry}
\end{minipage}
\hfill
\begin{minipage}[t]{0.38\textwidth}
    \centering
        \footnotesize % Reduce font size slightly
    \caption{
    Mean Absolute Error (MAE $\downarrow$) for traffic flow prediction. %under different architectures and symmetries. 
  %  Best performance is \textbf{bolded}.
    }
    \vspace{0.5em}
    \begin{tabular}{lc}
        \toprule
        \textbf{Model, Group} ($\sum_i n_i = n$) & \textbf{MAE} \\
        \midrule
        Fully Connected, $S_{n_1} \cdot S_{n_2}$             & 2.72 \\
        Sparse, $S_{n_1} \cdot S_{n_2}$                      & \textbf{2.69} \\
        Fully Connected, $S_{n_1} \cdot \dots \cdot S_{n_9}$ & 2.79 \\
        Sparse, $S_{n_1} \cdot \dots \cdot S_{n_9}$          & 2.77 \\
        DCRNN \citep{li2018diffusion}, $S_{n}$                                                & 2.77 \\
        \bottomrule
    \end{tabular}
    \label{tab:traffic-prediction}
\end{minipage}
\end{table}

\paragraph{Traffic Prediction}

Next, we evaluate on the traffic forecasting task, also taken from \citep{huang2023approximately}. This uses the METR-LA dataset, containing time-series traffic data from 207 sensors deployed on Los Angeles highways. The sensor (node) records speeds and traffic volume every 5 minutes, resulting in a graph time series with node feature $X_t \in \mathbb{R}^{207 \times 2}$. The objective is to predict traffic conditions at future time $X_{t+1}$ based on the past traffic $\{X_t, X_{t-1}, X_{t-2} \}$. The underlying graph is defined via a sensor adjacency matrix $A \in \mathbb{R}^{207 \times 207}$ constructed from roadway connectivity.

To incorporate symmetry structure, we leverage the spatial layout of sensors along major highways. We consider two group structures, taken from \citep{huang2023approximately}: one with two clusters representing major highway branches $G = S_{n_1} \times S_{n_2}$, and another with nine clusters corresponding to finer-grained regional groupings $G = S_{n_1} \times \ldots \times S_{n_9}$. These symmetry groups serve as approximate equivariances, which encourage learning equivariant representations for similarly situated sensors. We construct node positional features such that $\mrm{Aut}(A^{(1)})=G$, and benchmark our model in both fully-connected and sparse graph regimes. As shown in \cref{tab:traffic-prediction}, choosing suitably smaller symmetry can outperform full permutation symmetry. 

\subsubsection{Pathfinder Task}

As discussed in \cref{sec:intro}, Transformer with PE (1D or 2D) breaks permutation symmetry completely, since each position feature is unique (i.e., trivial automorphism). We study the effect of imposing local symmetry on Transformers for image tasks, by sharing the same position vector for pixels within the same $p \times p$ patch for $p=2,3,4$. This patch-wise weight sharing preserves permutation symmetry within patches while distinguishing different patches. We evaluate these 2D-PE variants of Transformer on the Pathfinder-64 task \citep{tay2020long} using a standard architecture with learnable row/column embeddings. \cref{fig:exp_pe} shows that using local permutation symmetry can improve performance while slightly reducing the model parameter counts (see \cref{app:pathfinder} for details).

\begin{figure}[htb!]
\centering
\includegraphics[width=0.75\linewidth]{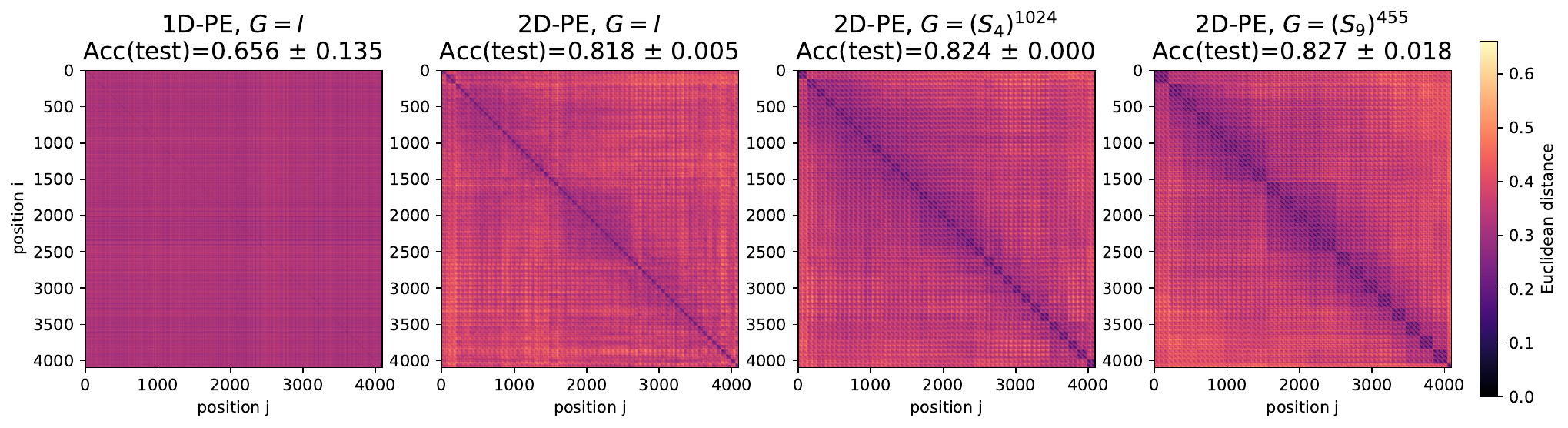}
\caption{Pairwise distances of learned positional encodings of Transformer on Pathfinder-64, with test accuracy shown at the top: imposing local permutation symmetry improves performance.}
\label{fig:exp_pe}
\end{figure}

\subsection{Synthetic Sequence Modeling Tasks}\label{sec:exp_synthetic_tasks}

\begin{table}[t]
\centering
\small % or \footnotesize, \scriptsize
\caption{Examples of synthetic tasks: input, equivariant and invariant output, and the target group.}
\label{tab:task_example}
\resizebox{\textwidth}{!}{%
\begin{tabular}{lcccc}
 \toprule
 Task & Input  &  Equivariant Output & Invariant Output & Group $G\leq S_n$\\
 \midrule
 Intersect 
  & $[a,b,c,\mid b,c,d]$ 
  & $[0,1,1,\mid 1,1,0]$
  & $2$
  & \multirow{2}{*}{$S_{n/2}\times S_{n/2}\times S_2$} \\

Symmetric Difference
  & $[a,b,c,\mid b,c,d]$
  & $[1,0,0,\mid 0,0,1]$
  & $1$
  & \\

 \midrule
 Palindrome ($k=3$) & $[a,b,c,b,d]$ & $[0,1,1,1,0]$  
            & True 
            &   \multirow{2}{*}{Sequence Reversal} \\
Monotone Run ($k=3$)
  & $[3,1,2,4,1]$
  & $[0,1,1,1,0]$
  & True
  & \\
 \midrule
 Cyclic Sum ($c=3$) & $[7,1,2,9,8]$ & $[1,0,0,1,1]$
            & $24$
            &  \multirow{2}{*}{$C_n$ (cyclic shifts)} \\
Cyclic Product ($c=3$) 
  & $[1,2,4,0,5]$
  & $[1,1,0,0,1]$
  & $10$
  & \\
 \midrule
 Detect Capital & $[A,b,b,b]$ & --
                & True
                & $S_{n-1}$ (permute all except first) \\
 \midrule
 Longest Palindrome & $[a,b,c,c,c,c,d,d] $& --
                    & $7$
                    & $S_n$ \\
 \bottomrule
\end{tabular}}
\end{table}

To assess the ability of \ourmodelspace to transfer structural knowledge \emph{across tasks}, we consider synthetic sequence modelling tasks capturing various permutation subgroup symmetries, summarized with examples in \cref{tab:task_example} (with more details deferred to \cref{tab:task_details}).
To demonstrate these sequence tasks can benefit from exploiting subgroup symmetries, we compare two \ourmodelspace variants: one \emph{equivariant} model with the \emph{correct} group (c.f. rightmost column in \cref{tab:task_example}), and one 
\emph{non-equivariant} baseline without any symmetry.  
As shown in \cref{fig:single_task_asen_nosym}, equivariant models incorporating the correct symmetry significantly outperform their non-equivariant counterparts across all tasks. More details including similar results for the invariant setting (see \cref{tab:symmetry_results_grouped_v2}) and the effect of varying training dataset size (see \cref{fig:single_task_size}) can be found in \cref{app:synthetic_tasks}. 

\begin{figure}[t]
\centering
\begin{minipage}[t]{0.55\linewidth}
\centering
\includegraphics[width=0.99\linewidth]{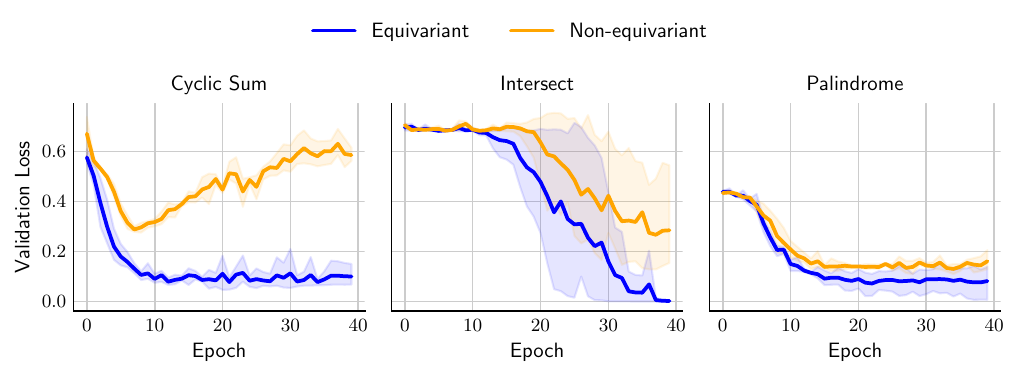}
\caption{\ourmodelspace with the correct group (``Equivariant'') converges faster and to a lower loss than its trivial symmetry counterpart (``Non-equivariant'').}
\label{fig:single_task_asen_nosym}
\end{minipage}
\hfill
\begin{minipage}[t]{0.4\linewidth}
\centering
\includegraphics[width=0.99\linewidth]{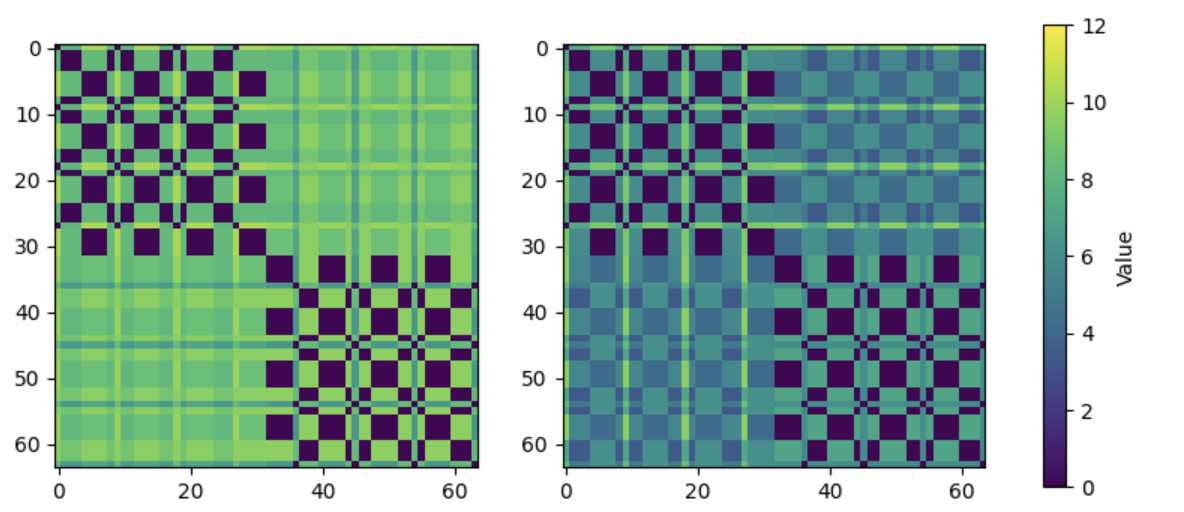}
\caption{Initial edge weights (left) and trained weights (right): \ourmodelspace learns more symmetries from data.}
\label{fig:intersec_sym_mismatch}
\end{minipage}
\end{figure}

Additionally, we explore if \ourmodelspace can learn more symmetries from data given a misspecified symmetry group (smaller than the target group). We consider the Intersect task with the target group $G= (S_{n/2})^2 \times S_2$, but only encode a smaller symmetry group $G'= (S_{n/2})^2$ in the edge features. We check the edge feature weights before and after training to see if the $S_2$ symmetry was learned. \cref{fig:intersec_sym_mismatch} shows pairwise distances between the learned Intersect edge weights: the top-left and bottom-right quadrants show the correct structure (checkerboard), whereas the top-right and bottom-left quadrants lack this initially but converge to the checkerboard pattern after training, discovering the $S_2$ symmetries from data. 

\subsubsection{Multitask Learning Applications}\label{sec:exp_multitask}

In this section, we investigate whether learning related tasks with compatible or similar symmetry groups can benefit \ourmodelspace from shared representation learning. We again make use of the synthetic tasks introduced in \cref{tab:task_example}. While each task can be learned independently using a different equivariant model, we assess whether \emph{multitask training} can facilitate improved generalization in low-data regimes. To this end, we use the same \ourmodelspace backbone (including weights) shared across tasks, while the TokenEmbedder and EdgeEmbedder modules are task-specific. During training, we randomly sample batches from all tasks, ensuring concurrent and balanced updates across tasks.

We then compare two regimes: one training only on $r$ units (``1 unit'' represents 2,500 datapoints) of a \emph{single-task}, and the other training on $r$ units with $n_{\text{task}} = 3$ tasks, specifically Intersect, Cyclic Sum, and Palindrome (see \cref{tab:task_example}). We vary $r \in \{0.2, 0.4, 0.6, 0.8, 1.0, 2.0, 3.0 \}$ and report the average performance across three random seeds. \cref{fig:multitask-intersect-loss} (top) shows that multitask training leads to significantly improved convergence and test accuracy in the low-data setting for learning Intersect, demonstrating the benefit of symmetry-aligned task transfer \emph{on certain tasks}; on the other tasks (Cyclic Sum, Palindrome), the multitask setting did not provide much benefit  (see details in \cref{app:multitask}). 

To further investigate the effect of the number of tasks, we repeat the above multitask training with $n_{\text{task}} \in \{ 4, 5, 6\}$ tasks, by adding Symmetric Difference, Cyclic Product, and Monotone Run, respectively (see \cref{tab:task_example}). \cref{fig:multitask-intersect-loss} (bottom) shows that while increasing $n_{\text{task}}$ improves performance in low-data regimes (e.g., $r \leq 1.0$ units), such effect diminishes as training size grows. As the compute cost scales linearly with both training set size and the number of tasks (see \cref{fig:multitask_compute} for details), this highlights a practical tradeoff in choosing between training size and task count.

\begin{figure}[h]
\centering
\includegraphics[width = 0.99\linewidth]{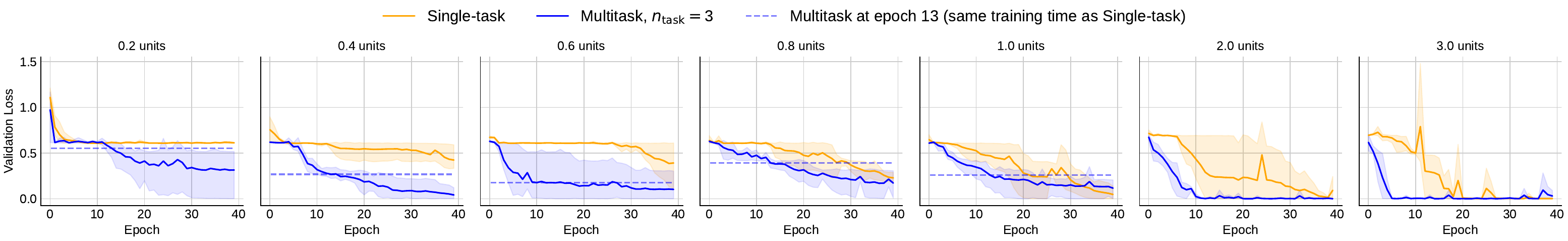}
\includegraphics[width = 0.99\linewidth]{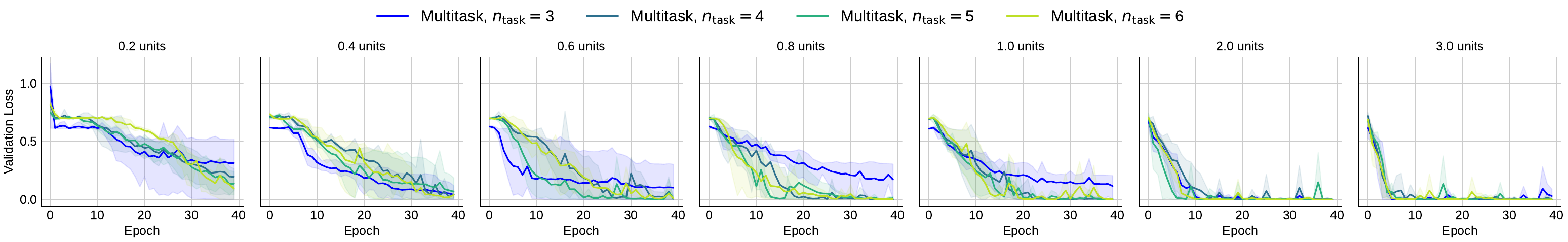}
\caption{(Top) Multitask versus single-task performance on Task Intersect with varying training set sizes ($r$ units), showing the benefit of multitask training particularly on low-data regimes, even when matched to the same total training time as the single-task setting (dashed horizontal lines); (Bottom) Multitask performance with varying number of tasks ($n_{\text{task}})$, showing the diminishing returns of adding more tasks as the training size increases.}
\label{fig:multitask-intersect-loss}
\end{figure}

\subsubsection{Transfer Learning Applications}\label{sec:exp_transfer}

Beyond multitask training, we explore transfer learning by pretraining on tasks with diverse symmetries and then finetuning on a new task with a distinct symmetry. We assume knowledge of the task domain to leverage and transfer the domain-related symmetries (i.e., knowing the base group $\mathbf G$ covering most downstream task symmetries as its subgroups). We focus on the same set of synthetic tasks in \cref{sec:exp_multitask} and simulate a low-resource regime by limiting access to only 0.15 units (375 datapoints) of training data. We compare two \ourmodelspace model variants: A \emph{finetune-only} baseline trained from scratch using only the chosen task data, and a \emph{pretrained} model initialized via joint training on 1.5 units each of the other tasks, followed by fine-tuning on the chosen task data. To encourage knowledge retention, we reduce the learning rate of the GNN backbone during fine-tuning, while allowing the embedding layers to update more freely. For the equivariant setting, \cref{fig:transfer-equiv} shows that pretraining \ourmodelspace achieves significantly better generalization compared to training from scratch on the Intersect task, showcasing \ourmodelspace as an effective initialization for related downstream tasks. For the invariant setting, \cref{fig:transfer-inv} shows that pretraining \ourmodelspace with the correctly specified symmetry outperforms its trivial symmetry baseline, and pretraining has a larger transfer effect when symmetry is provided on the Detect Capital and Palindrome tasks. Ablation studies on the pretraining effect with respect to varying sequence length are summarized in \cref{fig:pretrain_seq_length} and discussed in \cref{app.transfer}.

\begin{figure}[t]
\centering
\begin{minipage}[t]{0.48\linewidth}
    \centering
    \includegraphics[height=3.5cm]{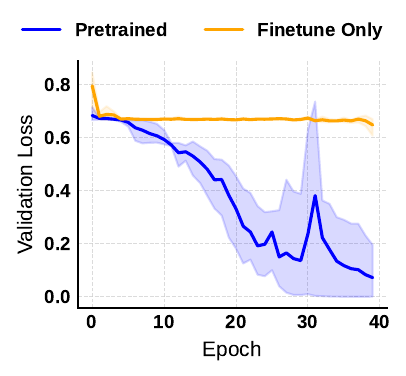}
    \caption{Equivariant transfer learning on Task Intersect: faster convergence and lower loss.}
    \label{fig:transfer-equiv}
\end{minipage}
\hfill
\begin{minipage}[t]{0.48\linewidth}
    \centering
    \includegraphics[height=3.5cm]{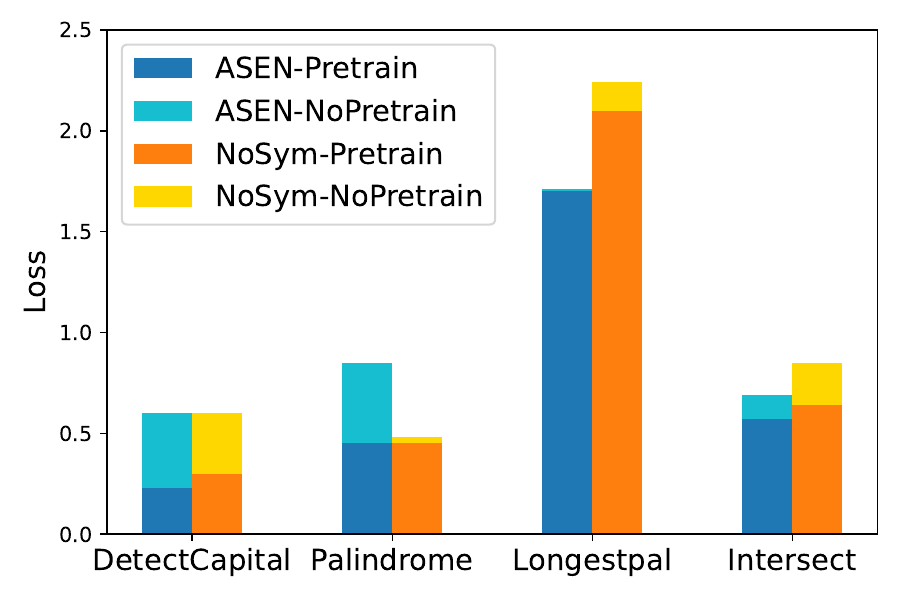}
    \caption{Invariant transfer learning: improved pretraining when using symmetry in \ourmodel.}
    \label{fig:transfer-inv}
\end{minipage}
\end{figure}

%% file: 5.appendix_method.tex
\clearpage
\section{Additional Details of \Cref{alg:2closure-edge-orbits}}\label{sec:app_algorithm_details}

\begin{enumerate}
    \item \textbf{Lift the generators.}  
    Given the generators $\sigma_1,\dots,\sigma_r$ of $G \leq S_n$, we define new permutations $\rho_i$ acting on $X \times X$ with $X=\{1,\dots,n\}$.  
    Each $\rho_i$ acts diagonally:
    \[
      \rho_i\colon (x_a, x_b) \;\mapsto\; (\sigma_i(x_a), \sigma_i(x_b)).
    \]
    In practice (e.g.\ in \texttt{sympy}), if $n$ is the degree of the action, we encode $(x_a,x_b)$ as a single index $a\cdot n + b$, and construct $\rho_i$ as a \texttt{Permutation} of size $n^2$.  
    \medskip

    \item \textbf{Form the diagonal subgroup.}  
    Define
    \[
      \Delta(G) \;=\; \langle \rho_1,\dots,\rho_r \rangle,
    \]
    the subgroup of $S_{n^2}$ generated by the lifted permutations.  
    \medskip
 
    Conceptually, we take the subgroup generated by the $\rho_i$. In \texttt{sympy}, this is done by calling
    \[
      \texttt{Delta = PermutationGroup([}\rho_1,\dots,\rho_r\texttt{])}.
    \]
    Internally, \texttt{PermutationGroup} builds a \emph{base and strong generating set} (BSGS) for $\Delta(G)$ via Schreier--Sims. The BSGS consists of a chosen base $B=(b_1,\dots,b_k)$ and \emph{strong generators} adapted to the chain of stabilizers
    \[
      \Delta(G) = G^{(0)} \geq G^{(1)} \geq \cdots \geq G^{(k)} = \{e\},
    \]
    where $G^{(i)}$ is the subgroup fixing the first $i$ base points in $B$. The associated \emph{basic orbits} and \emph{Schreier vectors} allow efficient navigation in the group.

    \item \textbf{Compute the orbits.}  
    The $G$--orbits on $X\times X$ are exactly the orbits of $\Delta(G)$. Concretely, starting from a pair $(x_a,x_b)$, we apply the generators $\rho_i$ repeatedly until no new pairs are found; the set obtained is its orbit.  
    With a BSGS, orbit computation is polynomial-time as SymPy performs a breadth-first search on the strong generators and stores transversal information for reconstruction. In code, this is as simple as
    \[
      \texttt{Delta.orbits()}
    \]
    which returns all $\Delta(G)$-orbits of the action.
\end{enumerate}

%% file: 6.appendix_proofs.tex
\clearpage
\section{Proofs} \label{app:proofs}

\PropOne*

 \begin{proof}
     We have already shown in Section~\ref{sec:asen_description} that $f_\theta$ is $G$-equivariant. Now, suppose that $h_\theta$ is injective in $\mathbf{v}$, and let $g \in \mathbf{G} \setminus G$. We want to show that $f_\theta(gx) \neq gf_\theta(x)$ for some choice of $x$. We choose any $x \in \mathcal X$. This holds because $\mathbf{v} \neq g\mathbf{v}$ (since otherwise $g \notin G = \Aut(\mathbf{v})$). Thus, by injectivity, $h_\theta(gx, \mathbf{v}) \neq h_\theta(gx, g\mathbf{v})$. This allows us to conclude that
     \begin{align}
         f_\theta(gx) & = h_\theta(gx, \mathbf{v})\\
         & \neq h_\theta(gx, g\mathbf{v})\\
         & = gh_\theta(x, \mathbf{v})\\
         & = gf_\theta(x).
     \end{align}
     This concludes the proof.
 \end{proof}
 We remark that the injectivity of $h_{\theta}$ in the input $\mathbf{v}$ is sufficient to prove \cref{prop:asen_equivariance}, but not necessary. As shown in the proof, this injectivity assumption allows us to show $f_{\theta}(gx) \neq g f_{\theta}(x)$ for \emph{any} $x \in \mathcal{X}$, while establishing this for a particular choice of $x$ suffices.

\LemOne*
\begin{proof}
We will show that for any $\sigma \in S_n \setminus G$, $f_{\theta}(\sigma X) \neq \sigma f_{\theta}(X)$. By \cref{eqn:mpnn}, it suffices to show that there exists a node $i$ such that
\begin{equation}
    h_{\theta}(\sigma X, A^{(2)})[i] \neq h_{\theta}(\sigma X, \sigma A^{(2)})[i]. \label{eqn:not_equiv}
\end{equation}
Since $\sigma \in S_n \setminus G$, there exists a node pair $(i,j)$ such that $A^{(2)}_{i,j} \neq A^{(2)}_{\sigma(i), \sigma(j)}$.
By the assumption that the node features $\{ X_j\}_{j=1}^n$ has distinct elements, the  multisets  
\begin{equation}
  \oms (X_\sigma(i), X_\sigma(j), A^{(2)}_{i,j}) \mid j \in [n] \cms \neq \oms (X_\sigma(i), X_\sigma(j), A^{(2)}_{\sigma(i),\sigma(j)}) \mid j \in [n]\cms. \label{eqn:diff_multiset}  
\end{equation}

By definition of MPNN \eqref{eqn:mpnn},
\begin{align}
  h_{\theta}(\sigma X, A^{(2)})[i] &= \phi\left(\psi_n(X_
  \sigma(i)), \tau \left( \oms \psi_e(X_\sigma(i), X_\sigma(j), A^{(2)}_{i,j}) \mid j \in [n] \cms \right) \right)  \\
   h_{\theta}(\sigma X, \sigma A^{(2)})[i] &= \phi\left(\psi_n(X_
  \sigma(i)), \tau \left( \oms \psi_e(X_\sigma(i), X_\sigma(j), A^{(2)}_{\sigma(i),\sigma(j)}) \mid j \in [n] \cms \right) \right)  .
\end{align}

By injectivity of $\psi_e$ and \cref{eqn:diff_multiset}, 
$$\oms \psi_e(X_\sigma(i), X_\sigma(j), A^{(2)}_{i,j}) \mid j \in [n] \cms \neq \oms \psi_e(X_\sigma(i), X_\sigma(j), A^{(2)}_{\sigma(i),\sigma(j)}) \mid j \in [n] \cms.$$

By injectivity of $\tau$ and $\phi$, we have $ h_{\theta}(\sigma X, A^{(2)})[i] \neq  h_{\theta}(\sigma X, \sigma A^{(2)})[i]$, completing the proof.
\end{proof}

\ThmASENGMLP*

\begin{proof}
     We show that one layer of \ourmodelspace using a message-passing GNN backbone can simulate $\sigma \circ T_i$, so suppose $L=1$ (i.e. the G-MLP has one layer). Recall that any equivariant linear map $T: \RR^n \to \RR^n$ can be viewed as a linear combination $T = \sum_{l=1}^d a_l B^l$, where $a_l \in \RR$ are scalars, the $B^l: \RR^n \to \RR^n$ are G-equivariant linear maps that span the vector space of G-equivariant linear maps, and $d$ is the dimension of this vector space of G-equivariant linear maps. Moreover, by an argument similar to \citet{maron2019universality}, we can define the $B^l$ as follows:

    Let $\tau_1, \ldots, \tau_q$ be the unique orbits of the action of $G$ on node-pair indices $[n] \times [n]$ (we refer to these as node-pair orbits). %\textcolor{blue}{T:edge orbits?} 
    Then %Ravanbaksh et al. 2017 , Maron et al. 2019 
    \citet{ravanbakhsh2017equivariance, maron2019universality} show that $q = d$, and the $B^l$ can be chosen as:
    \begin{equation}
        B^l_{ij} = \begin{cases}
            1 & (i, j) \in \tau_l\\
            0 & \text{else}.
        \end{cases}
    \end{equation}
    This is a form of weight sharing, where $B^l$ is constant on each node-pair orbit (and hence any $T$ that is a linear combination of them is constant on each node-pair orbit). The index $l$ denotes the $l$-th node-pair orbit.
    Note that the linear map can be shown to take a message-passing form as follows. For an input $x \in \RR^n$, let $x_i$ be viewed as a node representation for node $i$. Then the new node representation for node $i$ after this layer is
    \begin{align}
     T(x)_i & = \sum_{l=1}^q a_l (B^l x)_i\\
            & = \sum_{l=1}^q a_l \sum_{j=1}^n B^l_{ij} x_j\\ 
            & = \sum_{j=1}^n x_j \sum_{l=1}^q a_l B^l_{ij}
    \end{align}
    This can be interpreted as message passing, where the node $j$ passes message $x_j \sum_{l=1}^q a_l B^l_{ij}$ to the node $i$.
    We will show that \ourmodelspace with order $K=2$ and a message-passing GNN $h_{\theta}$ can simulate this map $T(x)$.
     Note that any $H \in \RR^{n^2}$ that has $\mathrm{Aut}(H) = G^{(2)}$ must satisfy that $H_{i_1,j_1} = H_{i_2, j_2}$ if and only if $(i_1, j_1) \sim_G (i_2, j_2)$. In other words, $H_{i_1, j_1} = H_{i_2, j_2}$ if and only if $(i_1, j_1)$ and $(i_2, j_2)$ are in the orbit $\tau_l$ for some $l$. Let the distinct entries of $H$ be denoted by $h_l \in \RR$, so that $h_l = H_{i_1, j_1}$ if and only if $(i_1, j_1) \in \tau_l$. 
     
     Finally, we define the GNN $h_\theta$ to take the following message-passing-based form. Let $x_i$ be the representation of node $i$. The GNN updates the node representations to $\hat x_i$ via two multilayer perceptrons $\mathrm{MLP}^e(h): \RR \to \RR^q$ and $\mathrm{MLP}^v: \RR^{q+1} \to \RR$ as follows:
    \begin{align}
        \hat x_i & = \sum_{j=1}^n \mathrm{MLP}^v (x_j, \mathrm{MLP}^e(H_{i,j}))\\
        \mathrm{MLP^e}(h)_l & = \begin{cases}
            1 & \text{if } h = h_l\\
            0 & \text{else }\\
        \end{cases}\\
        \mathrm{MLP}^v(x, y) & = x\sum_{l=1}^q a_l y_l.
    \end{align}
    Note that $\mathrm{MLP}^e(H_{i,j}) = B^l_{ij}$ so that $\mathrm{MLP}^v (x_j, \mathrm{MLP}^e(H_{i,j})) = x_j \sum_{l=1}^q a_l B^l_{ij}$, which shows that $\hat x_i = T(x)_i$. In practice, an MLP cannot exactly express these functions, but an MLP can approximate each function to arbitrary precision $\epsilon > 0$ on a compact domain. Note that the function  $\mathrm{MLP}^e$ seems discontinuous, but it is only defined on finitely many inputs, so it has a continuous extension that is exact on the finite inputs.
\end{proof}

\ThmUniversal*

\begin{proof}
Let $f^*:\mathcal{X}\rightarrow \mathcal{Y}$ be a continuous, $G$-equivariant function. To prove universality of $f_\theta(\cdot, \mathcal{H})$, we must show that for any $\epsilon$, there exists a $\theta$ such that $f_\theta(\cdot, \mathcal{H})$ is $\epsilon$-close to $f^*$. To achieve this, let's first define a new function (which we will prove is $\mathbf{G}$-equivariant), $n:\mathcal{X} \times \mathcal{O} \rightarrow \mathcal{Y}$ where $\mathcal{O}=\{g \mathcal{H} \: : g \in \mathbf{G}\}$ as follows: for any $g \in \mathbf{G}$,
\begin{equation}
    n(x, g \mathcal{H}) \coloneqq g f^{*}(g^{-1} x).
\end{equation}

Note in particular that $n(x,\mathcal{H})=f^*(x)$.
To first show that is a valid definition of $n$, we will argue that if $g \mc H = g' \mc H$, then $g f^*(g^{-1} x) = g' f^*(g'^{-1} x)$ for all $x \in \mathcal{X}$. By the assumption $g \mc H = g' \mc H$, we have $g' = gs$ for some $s \in G$, where $G$ is the stabilizer of $\mc H$. Then 
\begin{equation}
   g' f^*(g'^{-1} x) = (gs) f^* ((gs)^{-1} x) = g (s f^*( s^{-1} g^{-1} x)) = g f^*(g^{-1} x),  
\end{equation}
where the last equality follows from $f^*$ being $G$-equivariant.

Next, the continuity of $n$ on $\mathcal{X} \times \mathcal{O}$ follows from the continuity of $f^*$, the inversion map $g^{-1}$ and group action $g$.

Finally, to show $n$ is $\mathbf{G}$-equivariant, we note that for any $h \in \mathbf{G}$,
\begin{equation}
   n(hx, hg \mc H) = hg f^*\left( (hg)^{-1} hx \right) = h g f^*( g^{-1} x) = h n(x, g \mc H).
\end{equation}

Thus, $n$ is a continuous $\mathbf{G}$-equivariant function on $\mathcal{X} \times \mathcal{O}$. 
By universality of $\{f_{\theta}\}$ on $\mathcal{X} \times \mathcal{V} \mapsto \mathcal{Y}$ (under the supremum norm) and the orbit $\mathcal{O}$ being closed in $\mathcal{V}$, $\{f_{\theta}\}$ is also universal on the subdomain $\mathcal{X} \times \mathcal{O}$.
In particular, for any $\epsilon$, there exists some $\theta$ such that $f_\theta$ approximates $n$ $\epsilon$-well over all of $\mathcal{X} \times \mathcal{O}$. Thus, it must also approximate $n$ $\epsilon$-well over $\mathcal{X} \times \{\mathcal{H}\}$. But since $n=f^*$ on $\mathcal{X} \times \{\mathcal{H}\}$, this completes the proof.

\end{proof}

We remark that \cref{thm:universality} can be generalized to locally compact groups, via Jaworowski's equivariant extension theorem \citep{jaworowski1976extensions, lashof1981equivariant}.

%% file: 7.appendix_experiments.tex
\clearpage

\section{Additional Experiment Details}

\subsection{Details for Pathfinder Task} \label{app:pathfinder}

The Pathfinder task from Long Range Arena \citep{tay2020long} is a binary image classification problem: determine if there exists a path connecting two marked pixels. The image is flattened as a sequence input to probe the spatial reasoning of sequence models (e.g. Transformers). It is known in the literature that standard Transformers perform poorly on Pathfinder with 1D positional encoding (unless with proper pretraining \citep{amos2024never}), whereas 2D positional encoding (PE) provides better inductive bias by mapping the pixel $(i,j)$ as $E_R[i]+E_C[j]$ where $E_R, E_C \in \RR^{m \times d}$ are row and column embeddings (unlike 1D-PE embedding $E \in \RR^{m^2 \times d}$ for the flattened grid of size $m \times m$).

We use a standard Transformer architecture for Pathfinder task taken from \url{https://github.com/mlpen/Nystromformer}. We use the default model configuration (e.g., $2$ layers, $2$ heads, $64$ embedding dimensions, learnable positional encoding). We follow the standard protocol of training the model for $62400$ steps, linear learning rate decay schedule, and perform hyper-parameter search over the learning rate $\{ 0.0005, 0.0001, 0.00008\}$. We report more details of the performance for 1D-PE and 2D-PE variants in \cref{tab:pathfinder}.
\begin{table}[ht]
\small
\centering
\caption{Performance of Transformers in \texttt{Pathfinder64} tasks, using different positional encoding (PE): 1D-PE (learnable), 2D-PE (learnable, separate for rows and columns) with different group $G$. The accuracy is averaged over $3$ runs.} %
\label{tab:pathfinder}
\begin{tabular}{lllcc}
\toprule
Method & Group $G$ & Params. & Test Acc. (mean $\pm$ std) & \textcolor{gray}{Train Acc. (mean $\pm$ std)} \\
\midrule
%\multirow{4}{*}{1D}
1D-PE & $I$  & 370.5k     & 0.656 $\pm$ 0.135 &\textcolor{gray}{0.816 $\pm$ 0.093} \\
\midrule
\multirow{4}{*}{2D-PE}
& $I$   &  116.6k     & 0.818 $\pm$ 0.005 & \textcolor{gray}{0.831 $\pm$ 0.006} \\
& $(S_4)^{1024}$  & 112.5k  & 0.824 $\pm$ 0.000 & \textcolor{gray}{0.848 $\pm$ 0.006} \\
&$(S_9)^{455}$ & 111.2k  &  0.827 $\pm$ 0.018 & \textcolor{gray}{0.852 $\pm$ 0.013} \\
& $(S_{16})^{256}$ & 110.4k&  0.814 $\pm$ 0.020& \textcolor{gray}{0.840 $\pm$ 0.009} \\
\bottomrule
\end{tabular}

\end{table}

\begin{table}[t]
\small
\centering
\caption{Description of synthetic tasks, equivariant and invariant learning set-up, and their corresponding symmetry group.}
\label{tab:task_details}
\resizebox{\textwidth}{!}{%
\begin{tabular}{p{2.8cm} p{4cm} p{4cm} p{4cm}}
\toprule
\multicolumn{4}{c}{Synthetic Tasks} \\
\midrule
Task & Equivariant & Invariant & Symmetry \\
\midrule
Intersect & Given two sequences of length $\frac{n}{2}$, determine which elements of each sequence are present in the other 
          & Determine the size of the intersection of the two sequences 
          &  \multirow[c]{2}{*}{%
           \parbox{4cm}{%
          $S_{n/2} \times S_{n/2} \times S_2$: sequences can be reordered, and the two sequences can be swapped
          }%
          } \\[3ex]
Symmetric Difference & Given two sequences of length $\frac{n}{2}$, determine which elements of each sequence are \emph{not} present in the other & Determine the size of the difference of the two sequences&   \\
\midrule
Palindrome & Given a sequence of length $n$, determine where the sequence has a contiguous subsequence that is a palindrome of length $k$, if one exists 
           & Determine if the sequence has a contiguous subsequence that is a palindrome of length $k$ 
           &   \multirow[c]{2}{*}{%
           \parbox{4cm}{%
           Sequence reversal
           }%
           }\\[11ex]
Monotone Run &Given a sequence of length $n$, determine where the sequence has a contiguous subsequence that is strictly increasing or decreasing of length $k$, if one exists & Determine if the sequence has a contiguous subsequence that is strictly increasing or decreasing of length $k$ & \\
\midrule
Cyclic Sum & Given a sequence of length $n$, determine which cyclic contiguous subsequence of length $k$ has the largest sum 
           & Find the largest sum of a cyclic contiguous subsequence of length $k$ 
           & \multirow[c]{2}{*}{%
           \parbox{4cm}{%
           $C_n$ (cyclic shifts)
           }%
           }\\[9ex]
Cyclic Product & Given a sequence of length $n$, determine which cyclic contiguous subsequence of length $k$ has the largest product &  find the largest product of a cyclic contiguous subsequence of length $k$ & \\
\midrule
Detect Capital & N/A 
               & Given a string of length $n$, return True if properly capitalized, meaning the string is all Uppercase, lowercase, or only has first letter capitalized 
               & $S_{n-1}$, as all elements except first can be permuted \\[9ex]
\midrule
Longest Palindrome & N/A 
                   & Given a string of length $n$, determine the length of the longest palindrome that can be constructed using the characters of the string 
                   & $S_n$ \\
\bottomrule
\end{tabular}}
\end{table}

\subsection{Details for Synthetic Sequence Modeling Tasks in \cref{sec:exp_synthetic_tasks}} \label{app:synthetic_tasks}

We describe in details our chosen synthetic tasks in \cref{tab:task_details}.

\paragraph{Equivariant Experiment Set-up} The dataset per task contains $2500$ examples of sequence length $10$ and vocabulary size $7$, chosen to highlight the benefits of symmetry in data-scarce regimes.  We train \ourmodelspace in the binary node classification setting to learn an equivariant mapping from the input node features to output node labels, both input and output represented as sequences. Our model is optimized using standard cross-entropy loss, with hidden dimension of $128$, batch size of $64$, learning rate of $0.01$, and run on $40$ epochs.

\paragraph{Invariant Experiment Set-up and Results}

We apply \ourmodelspace in the invariant setting on the following synthetic tasks: Palindrome, Intersect, Detect Capital and Longest Palindrome (c.f. \cref{tab:task_example} and \cref{tab:task_details}). As each task can have a different type of label, we switch to the regression setting using an $L_1$ loss and train for $80$ epochs. The experiment uses a data-scarce regime where each task has $8000$ training datapoints, denoted as $1$ unit. 
As shown in \cref{tab:symmetry_results_grouped_v2},  \ourmodelspace with the correct symmetry group outperforms its trivial symmetry counterpart across tasks and training dataset sizes $ r \in \{ 1, 1.5, 2 \}$. We further perform an ablation study on the effect of training set size. As shown in \cref{fig:single_task_size}, for low-data regimes ($r \leq 1.5$), all tasks benefit from imposing invariance; with more training data ($r \geq 2$), some tasks (e.g., Detect Capital) exhibit comparable performance between invariant \ourmodelspace and its non-invariant counterpart.

\begin{table}[h!]
\centering
\caption{Performance of two ASEN variants across tasks for varying data amounts.% and symmetry groups. 
%Columns are grouped by symmetry: first \textbf{Equivariant}, then \textbf{I} (\blue{non-equivariant}). 
Within each model variant, columns indicate data used (in units): 1, 1.5, and 2.}
\label{tab:symmetry_results_grouped_v2}
\resizebox{\linewidth}{!}{%
\setlength{\tabcolsep}{4pt}
\begin{tabular}{lcccccc}
\toprule
\textbf{ASEN} (Loss $\downarrow$) & \multicolumn{3}{c}{\textbf{Invariant}} & \multicolumn{3}{c}{\textbf{Non-invariant}} \\
\cmidrule(lr){2-4} \cmidrule(lr){5-7}
\textbf{Task} & \textbf{1} & \textbf{1.5} & \textbf{2} & \textbf{1} & \textbf{1.5} & \textbf{2} \\
\midrule
Intersect           & 0.076 $\pm$ 0.002 & 0.061 $\pm$ 0.009  & 0.060 $\pm$ 0.005  & 0.081 $\pm$ 0.002 & 0.078 $\pm$ 0.0004 & 0.077 $\pm$ 0.001 \\
Palindrome          & 0.250 $\pm$ 0.04 & 0.170 $\pm$ 0.01  & 0.125 $\pm$ 0.05 & 0.270 $\pm$ 0.04 & 0.250 $\pm$ 0.04  & 0.132 $\pm$ 0.04 \\
Detect Capital      & 0.053 $\pm$ 0.005  & 0.056 $\pm$ 0.007  & 0.068 $\pm$ 0.016 & 0.075 $\pm$ 0.011 & 0.066 $\pm$ 0.004   & 0.063 $\pm$ 0.007 \\
Longest Palindrome  & 0.138 $\pm$ 0.014 & 0.110 $\pm$ 0.009  & 0.110 $\pm$ 0.009 & 0.152 $\pm$ 0.007 & 0.130 $\pm$ 0.012  & 0.130 $\pm$ 0.010 \\
\bottomrule
\end{tabular}
}
\end{table}

\begin{figure}[htb!]
\centering
\includegraphics[scale=0.6]{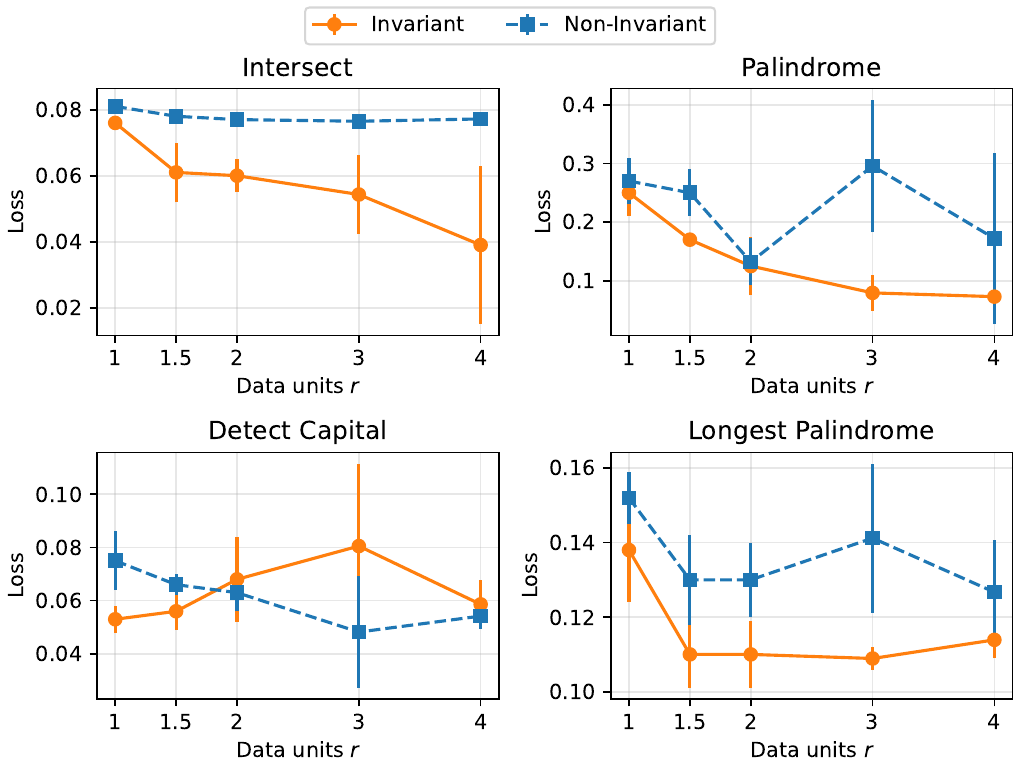}
\caption{Invariant experiment ablation: test loss of \ourmodelspace using invariance (orange) versus its non-invariant counterpart (blue) across varying training data sizes (1 unit refers to 8,000 datapoints) and different tasks.}
\label{fig:single_task_size}
\end{figure}

\subsection{Details for Multitask Applications \cref{sec:exp_multitask}}\label{app:multitask}

In \cref{fig:multitask-intersect-loss}, we conduct multitask training of \ourmodelspace across Intersect, Cyclicsum, and Palindrome, and observe that the performance on Intersect notably improved upon single task training. Meanwhile, \cref{tab:multitask-losses} shows that the multitask performance on the other two tasks (Cyclicsum, Palindrome) remains similar to the single task setting. \Cref{fig:multitask_compute} shows the compute cost with respect to the training set size and the number of tasks, which grows linearly with both factors.

\begin{table}[htb!]
\centering
\caption{Multi-task Test Losses (0.08 units)}
\label{tab:multitask-losses}
\begin{tabular}{lcc}
\toprule
\textbf{Method} & \textbf{Cyclicsum} & \textbf{Palindrome} \\
\midrule
Single-task & 0.3869 & 0.510 \\
Multi-task  & 0.3839 & 0.537 \\
\bottomrule
\end{tabular}
\end{table}

\begin{figure}[htb!]
\centering
\includegraphics[scale=0.6]{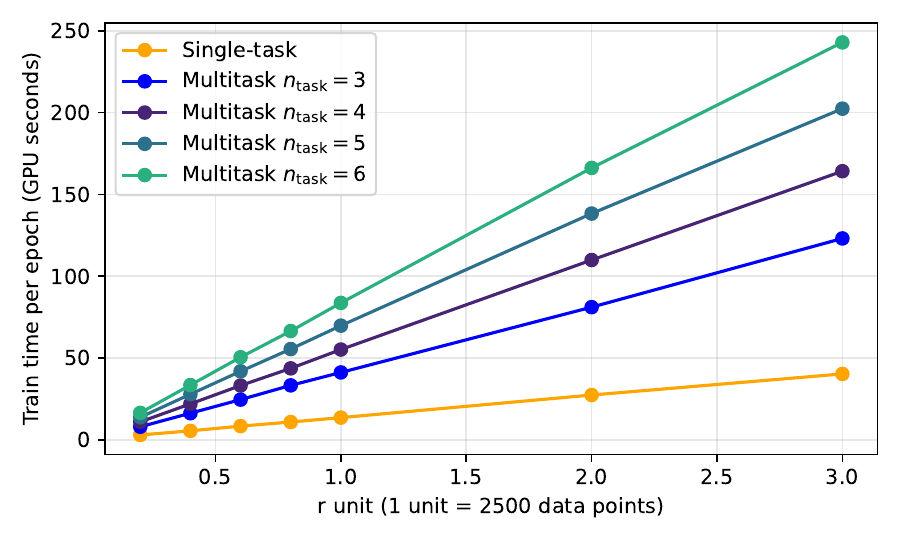}
\caption{Compute cost in multitask training (measured as training time per epoch on an NVIDIA A100 GPU device) with respect to the number of training set sizes $r$ and the number of tasks $n_{\text{task}}$.}
\label{fig:multitask_compute}
\end{figure}

\subsection{Details for Transfer Learning \cref{sec:exp_transfer}}\label{app.transfer}

For the invariant setting, we adopt a weighted $L_1$ regression loss for training (restricting all losses to between $0$ and $1$), and use standard $L_1$ loss for evaluation. We disable the TokenEmbedder, due to the diverse (invariant) target range across tasks. The tasks Palindrome, Intersect, Detect Capital and Longest Palindrome are each learned on an instance of \ourmodelspace pretrained on the other three. Each task dataset is of size $5600$ datapoints, with the chosen (finetuned) dataset restricted to $15$ percent, or $840$ datapoints. To probe the effect of transferring symmetry, we also provide the \ourmodelspace baseline with the trivial symmetry (denoted as ``NoSym''), which underperforms \ourmodelspace with the desired nontrivial symmetry group in the data-scarce regime. 

We also ablate how the pretraining effect varies with input sequence length. We use the same tasks and set-up as the transfer learning experiment described above, except using a smaller task dataset size with 4000 datapoints and varying the input sequence length over $\{6, 12, 18 \}$. As shown in \cref{fig:pretrain_seq_length}, pretraining effect depends on the task and the sequence length. Increasing the sequence length leads to a larger symmetry group, but also introduces greater inter-group difference across tasks. Identifying optimal task mixture and designing more effective training curriculum is an interesting direction for future work.

\begin{figure}[htb!]
\centering
\includegraphics[width=0.8\textwidth]{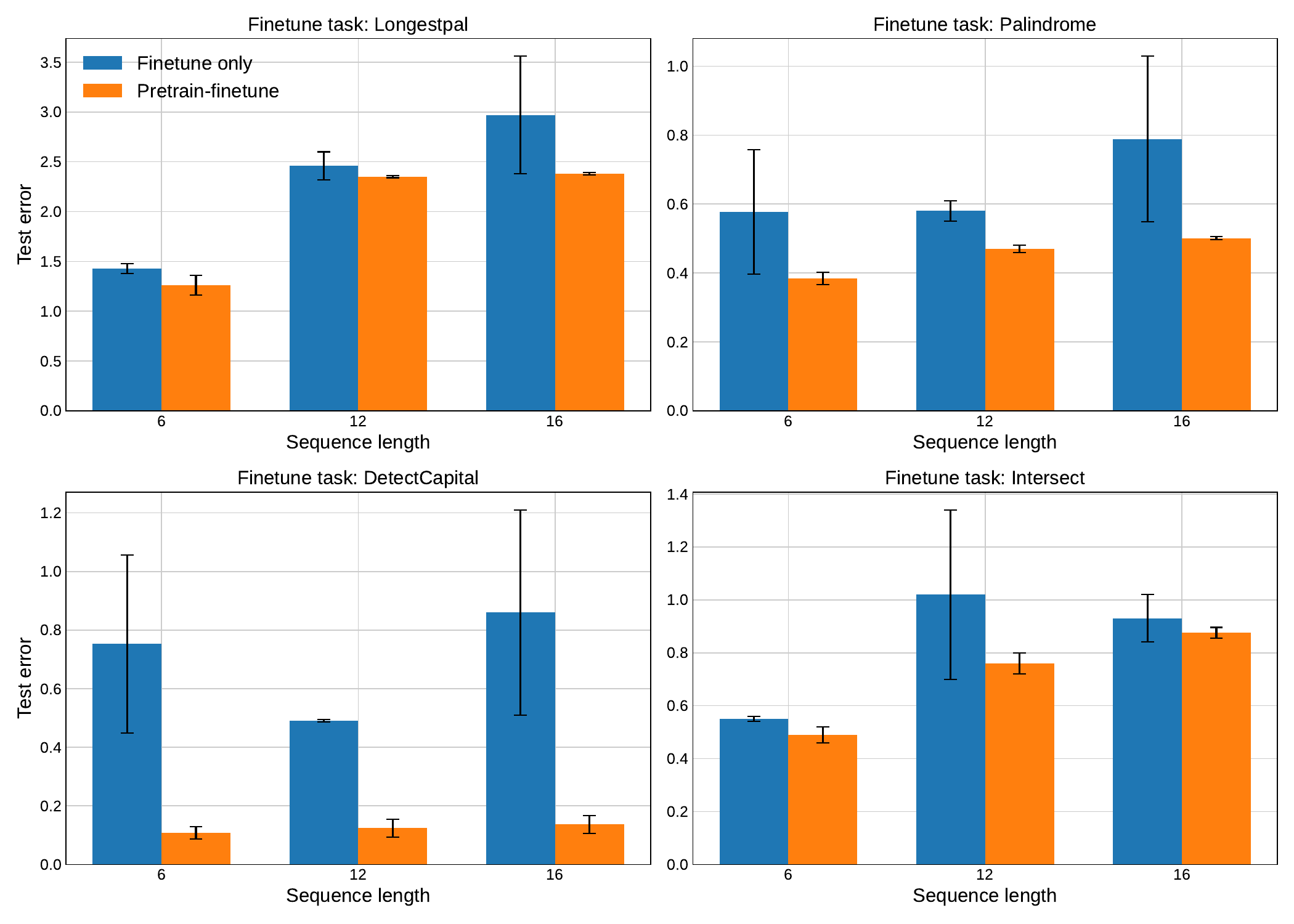}
\caption{Invariant pretraining ablation: test error of \ourmodelspace for finetune-only (blue bars) versus pretrain-finetune (orange bars) across varying input sequence lengths. Each subplot shows the pretraining effect for a different finetune task.}
\label{fig:pretrain_seq_length}
\end{figure}

\subsection{Overconstrained Symmetry}\label{exp:sec_negative}

In \cref{sec:exp_synthetic_tasks} and \cref{sec:exp_transfer}, we show that \ourmodelspace incorporating the desired symmetries outperforms its trivial symmetry baseline across a wide range of tasks and settings. As noted before, the symmetry groups considered in these tasks are equal to their 2-closure, and thus \ourmodelspace can accurately model the desired symmetry via the edge orbits whose automorphism are the 2-closure group. We also discuss in \cref{sec:experiments} that \ourmodelspace can learn more symmetries from data (i.e., learn to tie weights in the \emph{EdgeEmbedder} module), making it robust when choosing $G^{(2)}$ to be smaller than the target symmetry. We now present a negative example where $G^{(2)} > G$; in such a case, \ourmodelspace can fail. 

Consider the task of determining the sign of a permutation, namely deciding whether an even or odd number of inversions is used to create a permutation. For example, for $n=4$, the identity permutation $[1, 2, 3, 4]$ is even, whereas $[2, 1, 3, 4]$ with one inversion is odd, and $[2, 3, 1, 4]$ with two inversions is even. This task is invariant under the alternating group $A_n$, while the 2-closure of $A_n$ is $S_n$, meaning using \ourmodelspace taking the 2-closure symmetry breaking input would include significantly more symmetries.

Choosing permutations of length $n=7$, we train \ourmodelspace with $G^{(2)}=S_n$ and its trivial symmetry baseline using binary cross-entropy loss for this invariant classification problem. We find that \ourmodelspace performs no better than chance (i.e., stuck at test loss of $0.69$, the same as a random guess baseline). On the other hand, the trivial symmetry baseline eventually reaches a test loss of $0.28$, outperforming \ourmodelspace with overconstrained symmetries.